\def\<{ {\langle} }
\def\>{ {\rangle} }
\def\figref#1{figure~\ref{#1}}
\def\eqref#1{equation~\ref{#1}}
\def\1{\bm{1}}
\newcommand{\norm}[1]{\left\lVert#1\right\rVert}
\DeclareMathAlphabet{\mathsfit}{\encodingdefault}{\sfdefault}{m}{sl}
\SetMathAlphabet{\mathsfit}{bold}{\encodingdefault}{\sfdefault}{bx}{n}
\def\mbi#1{\boldsymbol{#1}} % Bold and italic (math bold italic)
\def\mbf#1{\mathbf{#1}}
\def\sR{{\mathbb{R}}}
\DeclareMathOperator*{\argmin}{arg\,min}
\newtheorem{theorem}{Theorem}
\newtheorem{lemma}[theorem]{Lemma}
\renewenvironment{proof}{\noindent\textbf{Proof.}\hspace*{.3em}}{\qed\\}
\newenvironment{proof-sketch}{\noindent\textbf{Proof Sketch}
  \hspace*{1em}}{\qed\bigskip\\}
\newenvironment{proof-idea}{\noindent\textbf{Proof Idea}
  \hspace*{1em}}{\qed\bigskip\\}
\newenvironment{proof-of-lemma}[1][{}]{\noindent\textbf{Proof of Lemma {#1}}
  \hspace*{1em}}{\qed\\}
\newenvironment{proof-of-theorem}[1][{}]{\noindent\textbf{Proof of Theorem {#1}}
  \hspace*{1em}}{\qed\\}
\newenvironment{proof-attempt}{\noindent\textbf{Proof Attempt}
  \hspace*{1em}}{\qed\bigskip\\}
\newtheorem{observation}[theorem]{Observation}
\def\mathunderline#1#2{\color{#1}\underbracket{{\color{black}#2}}\color{black}}
\def\mathunderline#1#2{\color{#1}\underbracket[1pt][1pt]{{\color{black}#2}}\color{black}}
\newcommand{\hyper}{\mbi{\lambda}}
\newcommand{\hw}{\mbi{\theta}}
\newcommand{\stnhw}{\mbi{\phi}}
\newcommand{\dstnhw}{\mbi{\theta}}
\newcommand{\w}{\mbf{w}}
\newcommand{\pert}{\mbi{\epsilon}}
\newcommand{\scal}{\mbi{\sigma}}
\newcommand{\hwr}{\mbi{\Theta}}
\newcommand{\stnbr}{\mbi{\Phi}}
\newcommand{\trainobj}{\mathcal{L}_T}
\newcommand{\valobj}{\mathcal{L}_V}
\newcommand{\ba}{\mbf{a}}
\newcommand{\bx}{\mbf{x}}
\newcommand{\by}{\mbf{y}}
\newcommand{\scalm}{\mbi{\Sigma}}
\newcommand{\stnhwVec}{{\boldsymbol{\phi}}}
\newcommand{\jac}{\mathbf{J}}
\newcommand{\hess}{\mathbf{H}}
\newcommand{\gnHess}{\mathbf{G}}
\newcommand{\res}{\mathbf{r}}
\newcommand*{\vtilde}[2][0pt]{% \vtilde[<lift>]{<stuff>}
  \setbox0=\hbox{$#2$}%
  \widetilde{\mathrlap{\phantom{\rule{\wd0}{\ht0+{#1}}}}\smash{#2}}%
}
\newcommand*\circled[1]{\tikz[baseline=(char.base)]{
            \node[shape=circle,draw,inner sep=2pt] (char) {#1};}}
\title{$\Delta$-STN: Efficient Bilevel Optimization for Neural Networks using Structured Response Jacobians}
\author{
Juhan Bae\\
University of Toronto\\
Vector Institute\\
\texttt{jbae@cs.toronto.edu} \\
\And
Roger Grosse\\
University of Toronto\\
Vector Institute\\
\texttt{rgrosse@cs.toronto.edu}
}
\begin{document}

\maketitle

\begin{abstract}
Hyperparameter optimization of neural networks can be elegantly formulated as a bilevel optimization problem. While research on bilevel optimization of neural networks has been dominated by implicit differentiation and unrolling, hypernetworks such as Self-Tuning Networks (STNs) have recently gained traction due to their ability to amortize the optimization of the inner objective. In this paper, we diagnose several subtle pathologies in the training of STNs. Based on these observations, we propose the $\Delta$-STN, an improved hypernetwork architecture which stabilizes training and optimizes hyperparameters much more efficiently than STNs. The key idea is to focus on accurately approximating the best-response Jacobian rather than the full best-response function; we achieve this by reparameterizing the hypernetwork and linearizing the network around the current parameters. We demonstrate empirically that our $\Delta$-STN can tune regularization hyperparameters (e.g.~weight decay, dropout, number of cutout holes) with higher accuracy, faster convergence, and improved stability compared to existing approaches.
\end{abstract}

\section{Introduction}
\label{sec:introduction}
Tuning regularization hyperparameters such as weight decay, dropout~\citep{srivastava2014dropout}, and data augmentation is indispensable for state-of-the-art performance in a challenging dataset such as ImageNet~\citep{imagenet_cvpr09,smirnov2014comparison,cubuk2019randaugment}. An automatic approach to adapting these hyperparameters would improve performance and simplify the engineering process. Although black box methods for tuning hyperparameters such as grid search, random search~\citep{bergstra2012random}, and Bayesian optimization~\citep{snoek2012practical,snoek2015scalable} work well in low-dimensional hyperparameter spaces, they are computationally expensive, require many runs of training, and require that hyperparameter values be fixed throughout training.

Hyperparameter optimization can be elegantly formulated as a bilevel optimization problem~\citep{colson2007overview,franceschi2018bilevel}. Let $\w \in \sR^m$ denote parameters (e.g.~weights and biases) and $\hyper \in \sR^h$ denote hyperparameters (e.g.~weight decay). Let $\valobj$ and $\trainobj$ denote validation and training objectives, respectively. We aim to find the optimal hyperparameters $\hyper^*$ that minimize the validation objective at the end of training. Mathematically, the bilevel objective can be formulated as follows\footnote{The uniqueness of $\argmin$ is assumed throughout this paper.}: 
\begin{equation}
    \hyper^* = \argmin_{\hyper \in \sR^h} \valobj (\hyper, \w^*)
    \text{~subject to~} \w^* = \argmin_{\w \in \sR^m} \trainobj (\hyper, \w)
    \label{eq:bilvel-hyperparameter-obj}
\end{equation}
% \begin{equation}
%     \min_{\hyper \in \sR^h} \valobj (\hyper, \res(\hyper))
%     \text{~such that~} \res(\hyper) \colonequals \argmin_{\w \in \sR^m} \trainobj (\hyper, \w),
%     \label{eq:bilvel-hyperparameter-obj}
% \end{equation}
% where $\res \colon \sR^h \to \sR^m$ is the \emph{best-response (rational reaction) function}~\citep{gibbons1992primer}. 
In machine learning, most work on bilevel optimization has focused on implicit differentiation~\citep{larsen1996design,pedregosa2016hyperparameter} and unrolling~\citep{maclaurin2015gradient,franceschi2017forward}. A more recent approach, which we build on in this work, explicitly approximates the \emph{best-response (rational reaction) function} $\res(\hyper) = \argmin_{\w} \mathcal{L}_T (\hyper, \w)$ with a hypernetwork~\citep{schmidhuber1992learning,ha2016hypernetworks} and jointly optimizes the hyperparameters and the hypernetwork~\citep{lorraine2018stochastic}. The hypernetwork approach is advantageous because training the hypernetwork amortizes the inner-loop optimization work required by both implicit differentiation and unrolling. Since best-response functions are challenging to represent due to their high dimensionality, Self-Tuning Networks (STNs)~\citep{mackay2019self} construct a structured hypernetwork to each layer of the neural network, thereby allowing the efficient and scalable approximation of the best-response function.

In this work, we introduce the $\Delta$-STN, a novel architecture for bilevel optimization that fixes several subtle pathologies in training STNs. We first improve the conditioning of the Gauss-Newton Hessian and fix undesirable bilevel optimization dynamics by reparameterizing the hypernetwork, thereby enhancing the stability and convergence in training. Based on the proposed parameterization, we further introduce a modified training scheme that reduces variance in parameter updates and eliminates any bias induced by perturbing the hypernetwork.

Next, we linearize the best-response hypernetwork to yield an affine approximation of the best-response function. In particular, we linearize the dependency between the network's parameters and predictions so that the training algorithm is encouraged to accurately approximate the Jacobian of the best-response function. Empirically, we evaluate the performance of $\Delta$-STNs on linear models, image classification tasks, and language modelling tasks, and show our method consistently outperform the baselines, achieving better generalization performance in less time.

\section{Background}
\label{sec:background}

\subsection{Bilevel Optimization with Gradient Descent}
\label{sec:background-bilevel}
A bilevel problem (see~\citet{colson2007overview} for an overview) consists of two sub-problems, where one problem is nested within another. The \emph{outer-level problem (leader)} must be solved subject to the optimal value of the \emph{inner-level problem (follower)}. A general formulation of the bilevel problem is as follows:
\begin{align}
    \min_{\mbf{x} \in \sR^h} f(\mbf{x}, \mbf{y}^*) \text{~subject to~} \mbf{y}^* = \argmin_{\mbf{y} \in \sR^m} g(\mbf{x}, \mbf{y}),
    \label{eq:bilevel-problem}
\end{align}
where $f, g \colon \sR^h \times \sR^m \to \sR$ denote outer- and inner-level objectives (e.g.~$\valobj$ and $\trainobj$), and $\mbf{x} \in \sR^h$ and $\mbf{y} \in \sR^m$ denote outer- and inner-level variables (e.g.~$\hyper$ and $\w$). Many problems can be cast as bilevel objectives in machine learning, including hyperparameter optimization, generative adversarial networks (GANs)~\citep{goodfellow2014generative,jin2019local}, meta-learning~\citep{franceschi2018bilevel}, and neural architecture search~\citep{zoph2016neural,liu2018darts,cortes2017adanet}.

A na{\"\i}ve application of simultaneous gradient descent on training and validation objectives will fail due to the hierarchy induced by the bilevel structure~\citep{fiez2019convergence,wang2019solving}. A more principled approach in solving the bilevel problem is to incorporate the best-response function. Substituting the best-response function in the validation objective converts the bilevel problem to a single-level problem: 
\begin{align}
    \min_{\hyper \in \sR^h} \valobj (\hyper, \res(\hyper))
\end{align}
We refer readers to~\citet{fiez2019convergence} on a more detailed analysis of the best-response (rational reaction) function. The Jacobian of the best-response function is as follows:
\begin{align}
    \frac{\partial \res}{\partial \hyper} (\hyper) = -  \left(\frac{\partial^2 \trainobj}{\partial \w^2} (\hyper, \res(\hyper)) \right)^{-1} \frac{\partial^2 \trainobj}{ \partial \w \partial \hyper} (\hyper, \res(\hyper))
\end{align}
The gradient of the validation objective is composed of \emph{direct} and \emph{response gradients}. While the direct gradient captures the direct dependence on the hyperparameters in the validation objective, the response gradient captures how the optimal parameter responds to the change in the hyperparameters: 
\begin{align}
    \frac{\mathrm{d} \valobj}{\mathrm{d} \hyper} (\hyper, \res(\hyper)) &= \underbrace{\frac{\partial \valobj}{\partial \hyper}(\hyper, \w)}_{\text{Direct Gradient}}  + \underbrace{\mathunderline{red}{\left(\frac{\partial \res}{\partial \hyper} (\hyper) \right)^{\top}}\frac{\partial \valobj}{\partial \w} (\hyper, \res(\hyper)) }_{\text{Response Gradient}} 
    \label{eq:valid-loss-decompose}
\end{align}
For most hyperparameter optimization problems, the regularization penalty is not imposed in the validation objective, so the direct gradient is $\mbf{0}$. Therefore, either computing or approximating the best-response Jacobian (\textcolor{red}{\textbf{--}}) is essential for computing the gradient of the outer objective.

\subsection{Best-Response Hypernetwork}
\label{sec:background-hypernetwork}
\begin{figure}[t]
\vspace{-5mm}
\small
\begin{center}
\includegraphics[width=0.8\textwidth]{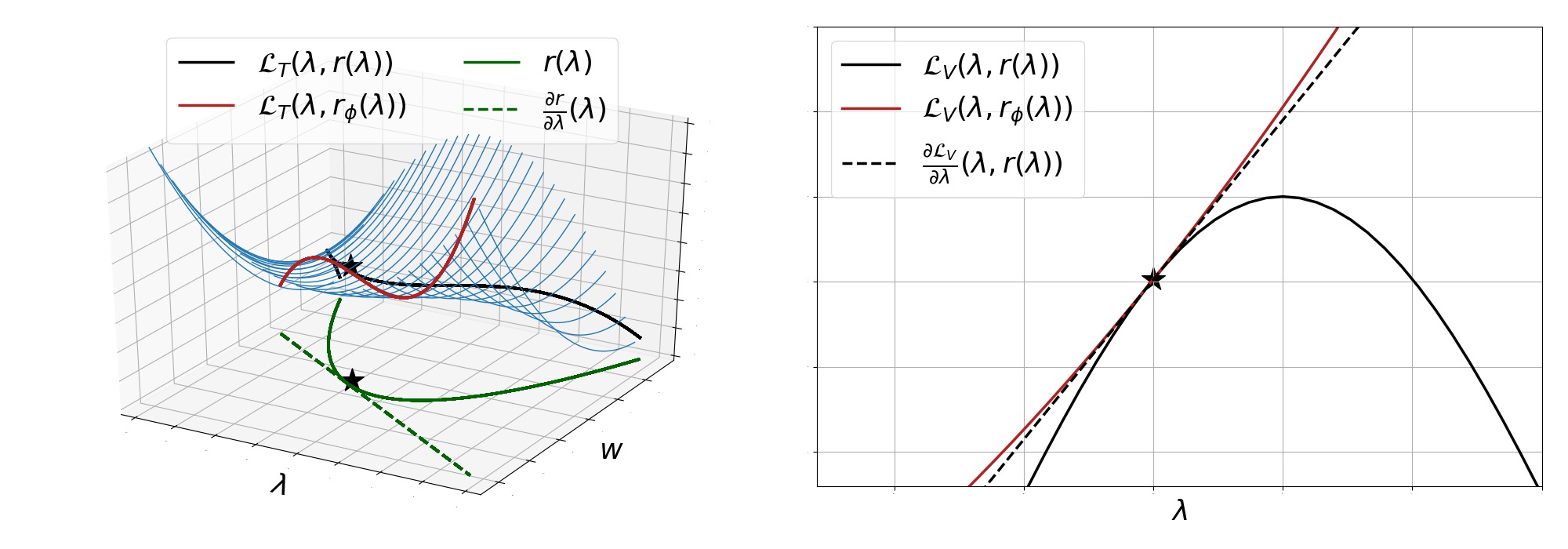}
\end{center}
\caption{\textbf{(Left)} The loss surface of the training objective. The best-response hypernetwork $\res_{\boldsymbol{\phi}}$ locally approximates the best-response function at the current configuration ($\star$). \textbf{(Right)} The loss surface of the validation objective. The local best-response hypernetwork embedded in the validation objective allows to capture the response gradient in Eqn.~\ref{eq:valid-loss-decompose}.}
\label{fig:bilevel-toy}
\vspace{-4mm}
\end{figure}

\citet{lorraine2018stochastic} locally model the best-response function with a hypernetwork. Let $\stnhw \in \sR^p$ be the best-response (hypernetwork) parameters and $p(\pert | \scal)$ be a zero-mean diagonal Gaussian distribution with a fixed perturbation scale $\scal \in \sR^h_+$. The objective for the hypernetwork is as follows:
\begin{align}
    \min_{\stnhw \in \sR^p} \mathbb{E}_{\pert \sim p(\pert|\scal)} \left[\trainobj (\hyper + \pert, \res_{\stnhw} (\hyper + \pert)) \right],
    \label{eq:lor-objective}
\end{align}
where $\res_{\stnhw}$ is the best-response hypernetwork. Intuitively, the hyperparameters are perturbed so that the hypernetwork can locally learn the best-response curve, as shown in \figref{fig:bilevel-toy}. Self-Tuning Networks (STNs)~\citep{mackay2019self} extend this approach by constructing a structured hypernetwork for each layer in neural networks in order to scale well with the number of hyperparameters. They parameterize the best-response function as:
\begin{align}
    \res_{\stnhw} (\hyper) = \stnbr \hyper + \stnhw_0,
    \label{eq:old-param}
\end{align}
where $\stnhw_0 \in \sR^m$ and $\mbi{\Phi} \in \sR^{m \times h}$, and further impose a structure on weights of the neural network, as detailed in section~\ref{section:nn}. The STN performs alternating gradient descents on hypernetwork, hyperparameters and perturbation scale, where the validation objective is formulated as:
\begin{align}
    \min_{\hyper \in \sR^h, \scal \in \sR^h_+}\mathbb{E}_{\pert \sim p(\pert| \scal)} \left[\mathcal{L}_V (\hyper + \pert, \res_{\boldsymbol{\phi}}(\hyper + \pert))\right] - \tau \mathbb{H} \left[p(\pert | \scal) \right]
    \label{eq:stn-val-obj}
\end{align}
Here, $\mathbb{H}[\cdot]$ is an entropy term weighted by $\tau \in \sR_{+}$ to prevent the perturbation scale from getting too small. Note that the above objective is similar to that of variational inference, where the first term is analogous to the negative log-likelihood. When $\tau$ ranges from 0 to 1, the objective interpolates between variational inference and variational optimization~\citep{staines2012variational}. The full algorithm for the STN is described in Alg.~\ref{alg:stn} (appendix~\ref{appendix:algo-stn}).

\section{Method}
\label{sec:method}
We now describe our main contributions: a centered parameterization of the hypernetwork, a modified training objective, and a linearization of the best-response hypernetwork. For simplicity, we first present these contributions in the context of a full linear hypernetwork (which is generally impractical to represent) and then explain how our contributions can be combined with the compact hypernetwork structure used in STNs.

\subsection{Centered Parameterization of the Best-Response Hypernetwork}
\label{sec:method-centered-param}
Our first contribution is to use a centered parameterization of the hypernetwork. In particular, observe that if the hyperparameters are transformed as $\vtilde[-0.5pt]{\hyper} = \hyper + \ba$, for any vector $\ba \in \sR^h$, then the hypernetwork parameters can be transformed as $\vtilde[-0.5pt]{\stnhw}_0 = \stnbr \ba + \stnhw_0$ to represent the same mapping in Eqn.~\ref{eq:old-param}. Therefore, we can reparameterize the model by adding an offset to the hyperparameters if it is advantageous for optimization. As it is widely considered beneficial to center inputs and activations at 0~\citep{lecun1998efficient,ioffe2015batch,montavon2012deep}, we propose to adopt the following centered parameterization of the hypernetwork:
\begin{align}
    \res_{\dstnhw}(\hyper, \hyper_0) = \hwr (\hyper - \hyper_0) + \w_0,
    \label{eq:new-param}
\end{align}
where $\w_0 \in \sR^m$ and $\mbi{\Theta} \in \sR^{m \times h}$. Intuitively, if $\hyper_0$ is regarded as the ``current hyperparameters'', then $\w_0$ can be seen as the ``current weights'', and $\hwr$ determines how the weights are adjusted in response to a perturbation to $\hyper$. We provide two justifications why the centered parameterization is advantageous; the first justification relates to the conditioning of the single-level optimization problem for $\stnhwVec$, while the second involves a phenomenon particular to bilevel optimization. 

For the first justification, consider optimizing the hypernetwork parameters $\stnhwVec$ for a fixed $\bar{\hyper} \in \sR^{h+1}$, where $\bar{\hyper}$ is defined as a vector formed by appending additional homogeneous coordinate with value 1 to incorporate the offset term. The speed of convergence of gradient descent is closely related to the condition number of the Hessian $\hess_\stnhwVec = \nabla^2_\stnhwVec \mathcal{L}_T$~\citep{nocedal2006numerical}. For neural networks, it is common to approximate $\hess_\stnhwVec$ with the Gauss-Newton Hessian~\citep{lecun1998efficient,martens2014new}, which linearizes the network around $\w$:
\begin{equation}
    \hess_\w \approx \gnHess_\w \triangleq \mathbb{E} \left[\jac_{\by\w}^{\top} \hess_\by \jac_{\by\w} \right],
\end{equation}
where $\jac_{\by\w} = \sfrac{\partial \by}{\partial \w}$ is the weight-output Jacobian, $\hess_\by = \nabla^2_\by \mathcal{L}_T$ is the Hessian of the loss with respect to the network outputs $\by$, and the expectation is with respect to the training distribution.

\begin{observation}
The Gauss-Newton Hessian with respect to the hypernetwork is given by:
\begin{equation}
    \gnHess_\stnhwVec = \mathbb{E} \left[\hat{\hyper} \hat{\hyper}^{\top} \otimes \jac_{\by\w}^{\top} \hess_\by \jac_{\by\w} \right],
\end{equation}
where $\hat{\hyper} = \bar{\hyper} + \bar{\pert}$ are the sampled hyperparameters and $\bar{\pert}$ is the perturbation vector appended with additional homogeneous coordinate with value 0.
\label{obs:gnhhw}
\end{observation}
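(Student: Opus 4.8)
The plan is to reduce the claim to a short Kronecker-product computation, exploiting the fact that the network outputs $\by$ depend on the hypernetwork parameters $\stnhwVec$ only through the induced weights $\w$. First I would recast the centered parameterization of Eqn.~\ref{eq:new-param} in the homogeneous coordinates described in the statement: collecting $\hwr$ and $\w_0$ into a single matrix $\stnbr = [\,\hwr \;\; \w_0\,] \in \sR^{m \times (h+1)}$ and forming the sampled hyperparameters $\hat{\hyper} = \bar{\hyper} + \bar{\pert} \in \sR^{h+1}$, where $\bar{\hyper}$ carries the centered coordinates $\hyper - \hyper_0$ with a $1$ appended and $\bar{\pert}$ carries $\pert$ with a $0$ appended. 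Then the weights produced by the hypernetwork collapse to the single affine map $\w = \stnbr\,\hat{\hyper} = \hwr(\hyper + \pert - \hyper_0) + \w_0$, matching Eqn.~\ref{eq:new-param} evaluated at the perturbed hyperparameters. With $\stnhwVec = \mathrm{vec}(\stnbr)$, this map is linear in $\stnhwVec$.

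Next I would assemble the output Jacobian with respect to $\stnhwVec$. Since $\by = \by(\w(\stnhwVec))$, the chain rule gives $\jac_{\by\stnhwVec} = \jac_{\by\w}\,\jac_{\w\stnhwVec}$, so it suffices to compute $\jac_{\w\stnhwVec}$. Applying the vectorization identity $\mathrm{vec}(\mA\mX\mB) = (\mB^{\top} \otimes \mA)\,\mathrm{vec}(\mX)$ to $\w = \rmI_m\,\stnbr\,\hat{\hyper}$ yields $\w = (\hat{\hyper}^{\top} \otimes \rmI_m)\,\stnhwVec$, and hence $\jac_{\w\stnhwVec} = \hat{\hyper}^{\top} \otimes \rmI_m$.

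It then remains to substitute into the definition $\gnHess_\stnhwVec = \E[\jac_{\by\stnhwVec}^{\top}\,\hess_\by\,\jac_{\by\stnhwVec}]$. Writing $\mM = \jac_{\by\w}^{\top}\,\hess_\by\,\jac_{\by\w}$ for the weight-space Gauss-Newton matrix $\gnHess_\w$, the integrand becomes $(\hat{\hyper} \otimes \rmI_m)\,\mM\,(\hat{\hyper}^{\top} \otimes \rmI_m)$. I would simplify this using the mixed-product rule $(\mA \otimes \mB)(\mC \otimes \mD) = (\mA\mC)\otimes(\mB\mD)$, regarding $\mM$ as $1 \otimes \mM$ with the $1 \times 1$ scalar block: the left factor collapses to $\hat{\hyper} \otimes \mM$, and multiplying by the right factor gives $\hat{\hyper}\hat{\hyper}^{\top} \otimes \mM$. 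Taking the expectation reproduces the claimed formula $\gnHess_\stnhwVec = \E[\hat{\hyper}\hat{\hyper}^{\top} \otimes \jac_{\by\w}^{\top}\,\hess_\by\,\jac_{\by\w}]$.

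The computation is essentially mechanical once the setup is fixed, so the only real obstacle is bookkeeping: committing to a single vectorization convention (column-major throughout) and tracking the homogeneous coordinate consistently, so that the perturbation is appended with a $0$ (leaving the offset coordinate unperturbed) while the centering term carries the $1$. A secondary point worth stating carefully is that the Gauss-Newton approximation here linearizes $\by$ in $\w$ but treats the map $\w = \stnbr\,\hat{\hyper}$ exactly; since that map is already affine in $\stnhwVec$, no additional linearization error is introduced at the hypernetwork level, which is precisely why the clean Kronecker factorization holds.
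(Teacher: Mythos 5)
Your proposal is correct and follows essentially the same route as the paper: both identify the hypernetwork-output Jacobian as $\hat{\hyper}^{\top} \otimes \jac_{\by\w}$ and then collapse $\jac_{\by\stnhwVec}^{\top}\hess_\by\jac_{\by\stnhwVec}$ via the Kronecker mixed-product rule, treating the middle factor as a $1\times 1$ block tensored with the weight-space matrix. The only difference is presentational: you derive the Jacobian explicitly from the vectorization identity $\mathrm{vec}(\mA\mX\mB) = (\mB^{\top}\otimes\mA)\,\mathrm{vec}(\mX)$, whereas the paper states it directly, so your write-up is a slightly more detailed version of the same argument.
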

See appendix~\ref{obs:gnhhw-proof} for the derivation. Heuristically, we can approximate $\gnHess_\stnhwVec$ by pushing the expectation inside the Kronecker product, a trick that underlies the K-FAC optimizer~\citep{martens2015optimizing,zhang2017noisy}:
\begin{equation}
    \gnHess_\stnhwVec \approx \mathbb{E} [\hat{\hyper} \hat{\hyper}^{\top} ] \otimes \mathbb{E} \left[\jac_{\by\w}^{\top} \hess_\by \jac_{\by\w} \right] = \mathbb{E} [\hat{\hyper} \hat{\hyper}^{\top}] \otimes \gnHess_\w, 
    \label{eqn:push_expectation}
\end{equation}
where $\gnHess_\w$ is the Gauss-Newton Hessian for the network itself. We now note the following well-known fact about the condition number of the Kronecker product (see appendix~\ref{obs:condition-kfac-proof} for the proof):
\begin{observation}
Let $\kappa(\mbf{A})$ denote the condition number of a square positive definite matrix $\mbf{A}$. Given square positive definite matrices $\mbf{A}$ and $\mbf{B}$, the condition number of $\mbf{A} \otimes \mbf{B}$ is given by $\kappa ( \mbf{A} \otimes \mbf{B} )=\kappa(\mbf{A}) \kappa(\mbf{B})$. 
\label{obs:condition-kfac}
\end{observation}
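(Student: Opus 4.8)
The plan is to reduce the claim to the elementary characterization of the condition number of a symmetric positive definite matrix as the ratio of its extremal eigenvalues, and then to read off the spectrum of $\mbf{A} \otimes \mbf{B}$ directly from the spectra of $\mbf{A}$ and $\mbf{B}$. Since $\mbf{A}$ is positive definite it has a real positive spectrum, its $\normltwo$ operator norm equals $\lambda_{\max}(\mbf{A})$, and the norm of its inverse equals $1/\lambda_{\min}(\mbf{A})$, so $\kappa(\mbf{A}) = \lambda_{\max}(\mbf{A})/\lambda_{\min}(\mbf{A})$, and likewise for $\mbf{B}$. The whole proof is then a matter of locating $\lambda_{\max}$ and $\lambda_{\min}$ of the Kronecker product.

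First I would establish the eigenstructure of $\mbf{A} \otimes \mbf{B}$. Let $(\lambda_i, \mbf{u}_i)$ range over an orthonormal eigenbasis of $\mbf{A}$ and $(\mu_j, \mbf{v}_j)$ over one of $\mbf{B}$. The mixed-product property of the Kronecker product gives $(\mbf{A} \otimes \mbf{B})(\mbf{u}_i \otimes \mbf{v}_j) = (\mbf{A}\mbf{u}_i) \otimes (\mbf{B}\mbf{v}_j) = \lambda_i \mu_j \, (\mbf{u}_i \otimes \mbf{v}_j)$. Because the vectors $\{\mbf{u}_i \otimes \mbf{v}_j\}$ form an orthonormal basis of the product space, this exhibits the complete spectrum of $\mbf{A} \otimes \mbf{B}$ as exactly the set of pairwise products $\{\lambda_i \mu_j\}$.

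Positive definiteness of both factors makes the remaining step immediate. Since every $\lambda_i > 0$ and every $\mu_j > 0$, each product $\lambda_i \mu_j$ is positive, so $\mbf{A} \otimes \mbf{B}$ is positive definite and $\kappa(\mbf{A} \otimes \mbf{B})$ is well defined; moreover the extremal products factor as $\max_{i,j} \lambda_i \mu_j = (\max_i \lambda_i)(\max_j \mu_j)$ and $\min_{i,j} \lambda_i \mu_j = (\min_i \lambda_i)(\min_j \mu_j)$. Dividing the two then yields
\[
\kappa(\mbf{A} \otimes \mbf{B}) = \frac{\lambda_{\max}(\mbf{A})\,\mu_{\max}(\mbf{B})}{\lambda_{\min}(\mbf{A})\,\mu_{\min}(\mbf{B})} = \kappa(\mbf{A})\,\kappa(\mbf{B}).
\]

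There is no deep obstacle, but the step deserving care is precisely the factorization of the extremal products, which uses positivity crucially rather than mere invertibility. For an indefinite matrix the largest-magnitude eigenvalue of the product could arise from pairing two negative eigenvalues, so $\max_{i,j}|\lambda_i \mu_j|$ would not split as a product of the individual spectral radii and the clean identity would break. Accordingly, the only point I would flag explicitly in the write-up is that the hypothesis of positive definiteness is exactly what licenses pushing the $\max$ and $\min$ through the product.
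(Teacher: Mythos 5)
Your proof is correct, but it takes a different route from the paper's. The paper's proof is a two-line norm computation: it writes $\kappa(\mbf{A} \otimes \mbf{B}) = \norm{\mbf{A} \otimes \mbf{B}}\,\norm{\mbf{A}^{-1} \otimes \mbf{B}^{-1}}$, invoking the identity $(\mbf{A} \otimes \mbf{B})^{-1} = \mbf{A}^{-1} \otimes \mbf{B}^{-1}$ and the multiplicativity of the spectral norm over Kronecker products, $\norm{\mbf{X} \otimes \mbf{Y}} = \norm{\mbf{X}}\,\norm{\mbf{Y}}$, both taken as known facts. You instead derive the full spectrum of $\mbf{A} \otimes \mbf{B}$ from scratch via the mixed-product property and the orthonormal basis $\{\mbf{u}_i \otimes \mbf{v}_j\}$, then take the ratio of extremal eigenvalues. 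Your argument is more self-contained: the norm-multiplicativity identity the paper leans on is itself usually proved exactly by your spectral computation, so you have effectively inlined the paper's citation. The paper's version is shorter and generalizes more gracefully (norms and inverses make sense without symmetry), while yours makes the mechanism — where the eigenvalues of the product come from — fully explicit.

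One correction to your closing remark: the cautionary claim about indefinite matrices is not right. Even for symmetric indefinite (or indeed arbitrary invertible) matrices, $\max_{i,j} |\lambda_i \mu_j| = \bigl(\max_i |\lambda_i|\bigr)\bigl(\max_j |\mu_j|\bigr)$, since absolute values always factor through products; more generally, the singular values of $\mbf{A} \otimes \mbf{B}$ are the pairwise products of the singular values of $\mbf{A}$ and $\mbf{B}$, so the identity $\kappa(\mbf{A} \otimes \mbf{B}) = \kappa(\mbf{A})\,\kappa(\mbf{B})$ with the spectral-norm definition of $\kappa$ holds for any invertible matrices. What positive definiteness actually buys you is something more modest: it guarantees that eigenvalues coincide with singular values, so the formula $\kappa(\mbf{A}) = \lambda_{\max}(\mbf{A})/\lambda_{\min}(\mbf{A})$ in terms of \emph{signed} eigenvalues is legitimate and the signed max and min factor without absolute values. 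The identity itself does not break without positivity; only that particular derivation of it does.
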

Hence, applying Observation~\ref{obs:condition-kfac} to Eqn.~\ref{eqn:push_expectation}, we would like to make the factor $\mathbb{E}[\hat{\hyper} \hat{\hyper}^{\top}]$ well-conditioned in order to make the overall optimization problem well-conditioned. We apply the well-known decomposition of the second moments:
\begin{align}
    \mathbb{E} [\hat{\hyper} \hat{\hyper}^{\top} ] = \text{Cov} (\hat{\hyper}) + \mathbb{E} [\hat{\hyper}] \mathbb{E}[\hat{\hyper}]^{\top} = \text{Cov}(\pert) + \bar{\hyper} \bar{\hyper}^\top
    \label{eq:decomp}
\end{align}
In the context of our method, the first term is the diagonal matrix $\text{diag}(\boldsymbol{\sigma}^2)$, whose entries are typically small. The second term is a rank-1 matrix whose nonzero eigenvalue is $\|\hyper\|^2$. If $\hyper$ is high-dimensional or far from $\mbf{0}$, the latter will dominate, and the problem will be ill-conditioned due to the large outlier eigenvalue. Therefore, to keep the problem well-conditioned, we would like to ensure that $\|\hyper\|^2$ is small and one way to do this is to use a centered parameterization that sends $\hyper \mapsto \mathbf{0}$, such as Eqn.~\ref{eq:new-param}. This suggests that centering should improve the conditioning of the inner objective.

The above analysis justifies why we would expect centering to speed up training of the hypernetwork, taken as a single-level optimization problem. However, there is also a second undesirable effect of uncentered representations involving \emph{interactions} between the inner and outer optimizations. The result of a single batch gradient descent update to $\stnbr$ is as follows:
\begin{align}
    \stnbr^{(1)} &= \stnbr^{(0)} - \alpha \nabla_{\stnbr} (\trainobj(\hyper^{(0)}, \res_{\stnhw^{(0)}}(\hyper^{(0)})) )= \stnbr^{(0)} - \alpha (\nabla_\w \trainobj(\hyper^{(0)}, \w^{(0)})) (\hyper^{(0)})^{\top}, 
    \label{eqn:uncentered_dynamics}
\end{align}
where $\hyper^{(0)}$ denotes the hyperparameters sampled in the first iteration and $\alpha$ denotes the learning rate of the inner objective. This results in a failure of credit assignment: it induces the (likely mistaken) belief that adjusting $\hyper$ in the direction of $\hyper$ will move the optimal weights in the direction of $-\nabla_\w \trainobj$. Plugging in Eqn.~\ref{eqn:uncentered_dynamics}, this leads to the following hyperparameter gradient $\nabla_{\hyper} (\valobj)$:
\begin{align}
    \nabla_{\hyper} (\valobj(\hyper, \res_{\stnhwVec^{(1)}}(\hyper))) &= \nabla_{\hyper} \valobj(\hyper, \w^{(1)}) + (\stnbr^{(1)})^{\top} \nabla_\w \valobj(\hyper, \w^{(1)}) \\
    &= \nabla_{\hyper} \valobj(\hyper, \w^{(1)}) + (\stnbr^{(0)})^{\top} \nabla_\w \valobj(\hyper, \w^{(1)}) \notag \\
    &\phantom{=} - \alpha (\nabla_\w \mathcal{L}_T(\hyper^{(0)}, \w^{(0)}))^{\top} (\nabla_\w \valobj(\hyper, \w^{(1)})) \hyper^{(0)}
    \label{eq:valid-undesirable}
\end{align}
In Eqn.~\ref{eq:valid-undesirable}, the coefficient in front of $\hyper^{(0)}$ in the last term is the inner product between the training and validation gradients. Early in training, we would expect the training and validation gradients to be well-aligned, so this inner product would be positive. Hence, $\hyper$ will tend to move in the direction of $\hyper$, i.e.~away from $\mbf{0}$, simply due to the bilevel optimization dynamics. We found this to be a very strong effect in some cases, and it resulted in pathological choices of hyperparameters early in training. Using the centered parameterization appeared to fix the problem, as we show in section~\ref{sec:toy-experiments} and appendix~\ref{exp-toy}. 

\subsection{Modified Update Rule for Centered Parameterization}
\label{sec:method-modified-objective}
We observed in section~\ref{sec:method-centered-param} that in the centered parameterization, $\w_0$ can be seen as the current weights, while $\hwr$ is the Jacobian of the approximate best response function $\res_{\hw}$. This suggests a modification to the original STN training procedure. While in the original STN, the full hypernetwork was trained using sampled hyperparameters, we claim that $\w_0$ can instead be trained using gradient descent on the regularized training loss, just like the weights of an ordinary neural network. In this sense, we separate the training objectives in the following manner:
\begin{align}
    \label{eq:new-objective_hw0}
    \w_0^* = \argmin_{\w_0 \in \sR^m} &~\trainobj (\hyper, \w_0) \\
    \label{eq:new-objective_hw1}
    \mbi{\Theta}^* = \argmin_{\mbi{\Theta} \in \sR^{m \times h}} \mathbb{E}_{\pert \sim p(\pert | \scal)} &\left[\trainobj (\hyper + \pert, \res_{\hw}(\hyper + \pert, \hyper)) \right] 
\end{align}
% \begin{equation}
% \begin{gathered}
% \argmin_{\w_0 \in \sR^m} \trainobj (\hyper, \w_0) \\
% \argmin_{\mbi{\Theta} \in \sR^{m \times h}} \mathbb{E}_{\pert \sim p(\pert | \scal)} \left[\trainobj (\hyper + \pert, \res_{\hw}(\hyper + \pert, \hyper)) \right] 
% \end{gathered}
% \end{equation}
The exclusion of perturbation in Eqn.~\ref{eq:new-objective_hw0} reduces the variance for the updates on $\w_0$, yielding faster convergence. Moreover, it can eliminate any bias to the optimal $\w_0^*$ induced by the perturbation. In appendix~\ref{sec:linear-reg-example}, we show that, even for linear regression with $L_2$ regularization, the optimal weight $\w_0^*$ does not match the correct solution under the STN's objective, whereas the modified objective recovers the correct solution. The following theorem shows that, for a general quadratic inner-level objective, the proposed parameterization converges to the best-response Jacobian. 
\begin{theorem}
\label{thm-quad-min}
Suppose $\mathcal{L}_T$ is quadratic with $\sfrac{\partial^2 \mathcal{L}_T}{\partial \w^2} \succ 0$ and let $p(\pert | \scal)$ be a diagonal Gaussian distribution with mean $\mbf{0}$ and variance $\sigma^2 \mbf{I}$. Fixing $\hyper_0 \in \mathbb{R}^h$ and $\w_0 \in \mathbb{R}^m$, the solution to the objective in Eqn.~\ref{eq:new-objective_hw1} is the best-response Jacobian.
\end{theorem}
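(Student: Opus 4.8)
The plan is to exploit the fact that the objective in Eqn.~\ref{eq:new-objective_hw1} fixes the center at $\hyper_0 = \hyper$, which collapses the centered hypernetwork of Eqn.~\ref{eq:new-param} to a pure linear map of the perturbation: $\res_{\hw}(\hyper + \pert, \hyper) = \hwr\big((\hyper+\pert) - \hyper\big) + \w_0 = \hwr\pert + \w_0$. Since $\w_0$ is held fixed, the minimization reduces to minimizing $\E_{\pert}\!\left[\trainobj(\hyper + \pert, \w_0 + \hwr\pert)\right]$ over $\hwr$ alone. Because $\trainobj$ is quadratic, I would then write its second-order Taylor expansion about the base point $(\hyper, \w_0)$, which is \emph{exact}. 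Abbreviating the (constant) Hessian blocks as $\hess_{\w\w} = \tfrac{\partial^2\trainobj}{\partial\w^2}$, $\hess_{\w\hyper} = \tfrac{\partial^2\trainobj}{\partial\w\,\partial\hyper}$, and $\hess_{\hyper\hyper} = \tfrac{\partial^2\trainobj}{\partial\hyper^2}$, the increment $(\pert,\ \hwr\pert)$ contributes linear terms in $\pert$ plus the quadratic form $\tfrac12\big(\pert^\top\hess_{\hyper\hyper}\pert + 2\,\pert^\top\hess_{\hyper\w}\hwr\pert + \pert^\top\hwr^\top\hess_{\w\w}\hwr\pert\big)$.

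Next I would take the expectation over $\pert \sim \mathcal{N}(\mbf{0}, \sigma^2\mbf{I})$. The linear terms vanish by zero mean, and the quadratic terms are handled with $\E[\pert\pert^\top] = \sigma^2\mbf{I}$ and the identity $\E[\pert^\top\mbf{M}\pert] = \sigma^2\Tr(\mbf{M})$. Discarding all terms independent of $\hwr$ (the constant $\trainobj(\hyper,\w_0)$ and the $\Tr(\hess_{\hyper\hyper})$ piece), the objective becomes, up to the positive scalar $\sigma^2/2$,
\[
    J(\hwr) = 2\,\Tr\!\big(\hess_{\hyper\w}\hwr\big) + \Tr\!\big(\hwr^\top\hess_{\w\w}\hwr\big).
\]

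Finally, since $\hess_{\w\w} \succ 0$ by hypothesis, $J$ is a strictly convex quadratic in $\hwr$, so its unique minimizer is obtained by setting the matrix gradient to zero. Using $\nabla_{\hwr}\Tr(\hess_{\hyper\w}\hwr) = \hess_{\hyper\w}^\top = \hess_{\w\hyper}$ (by symmetry of the Hessian) and $\nabla_{\hwr}\Tr(\hwr^\top\hess_{\w\w}\hwr) = 2\,\hess_{\w\w}\hwr$, the stationarity condition is $\hess_{\w\hyper} + \hess_{\w\w}\hwr = \mbf{0}$, which yields $\hwr^* = -\,\hess_{\w\w}^{-1}\hess_{\w\hyper}$ — precisely the best-response Jacobian given in Section~\ref{sec:background-bilevel}. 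The computation is essentially bookkeeping, so I anticipate no deep obstacle; the steps that require care are the initial reduction (recognizing that $\hyper_0 = \hyper$ cancels the offset so the hypernetwork contributes only $\hwr\pert$, decoupling the $\hwr$ optimization from $\w_0$) and the correct use of the trace identities together with the symmetry of the mixed block $\hess_{\w\hyper}$ when differentiating with respect to the matrix $\hwr$.
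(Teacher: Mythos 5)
Your proposal is correct and follows essentially the same route as the paper's proof: expand the quadratic objective $\mathbb{E}_{\pert}\left[\trainobj(\hyper+\pert,\,\w_0+\hwr\pert)\right]$, use $\mathbb{E}[\pert]=\mbf{0}$ and $\mathbb{E}[\pert\pert^{\top}]=\sigma^2\mbf{I}$ to reduce it to trace terms, and set the matrix gradient in $\hwr$ to zero to obtain $\hwr^* = -\hess_{\w\w}^{-1}\hess_{\w\hyper}$, which is the best-response Jacobian. The only cosmetic difference is that you organize the computation as an (exact) second-order Taylor expansion about the base point $(\hyper,\w_0)$, so the $\hwr$-dependent linear terms vanish immediately in expectation, whereas the paper carries explicit coefficient matrices $\mbf{A},\mbf{B},\mbf{C},\mbf{d},\mbf{e}$ through the expansion; the underlying algebra is identical.
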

See appendix~\ref{thm-quad-proof} for the proof.

\subsection{Direct Approximation of the Best-Response Jacobian using a Linearized Network}
\label{sec:method-linearization}
The STN aims to learn a linear hypernetwork that approximates the best-response function in a region around $\hyper$. However, if the perturbation $\pert$ is large, it may be difficult to approximate the best-response function as linear within this region. Part of the problem is that the function represented by the network behaves nonlinearly with respect to $\w$, such that the linear adjustment represented by $\boldsymbol{\Phi} \hyper$ (in Eqn.~\ref{eq:old-param}) may have a highly nonlinear effect on the network's predictions. We claim it is in fact unnecessary to account for the nonlinear effect of large changes to $\w$, as the hypernetwork is only used to estimate the best-response Jacobian at $\hyper_0$, and the Jacobian depends only on the effect of infinitesimal changes to $\w$.
\begin{wrapfigure}[15]{R}{0.42\textwidth}
    \centering
    % \vspace{-0.9cm}
    \includegraphics[width=0.42\textwidth]{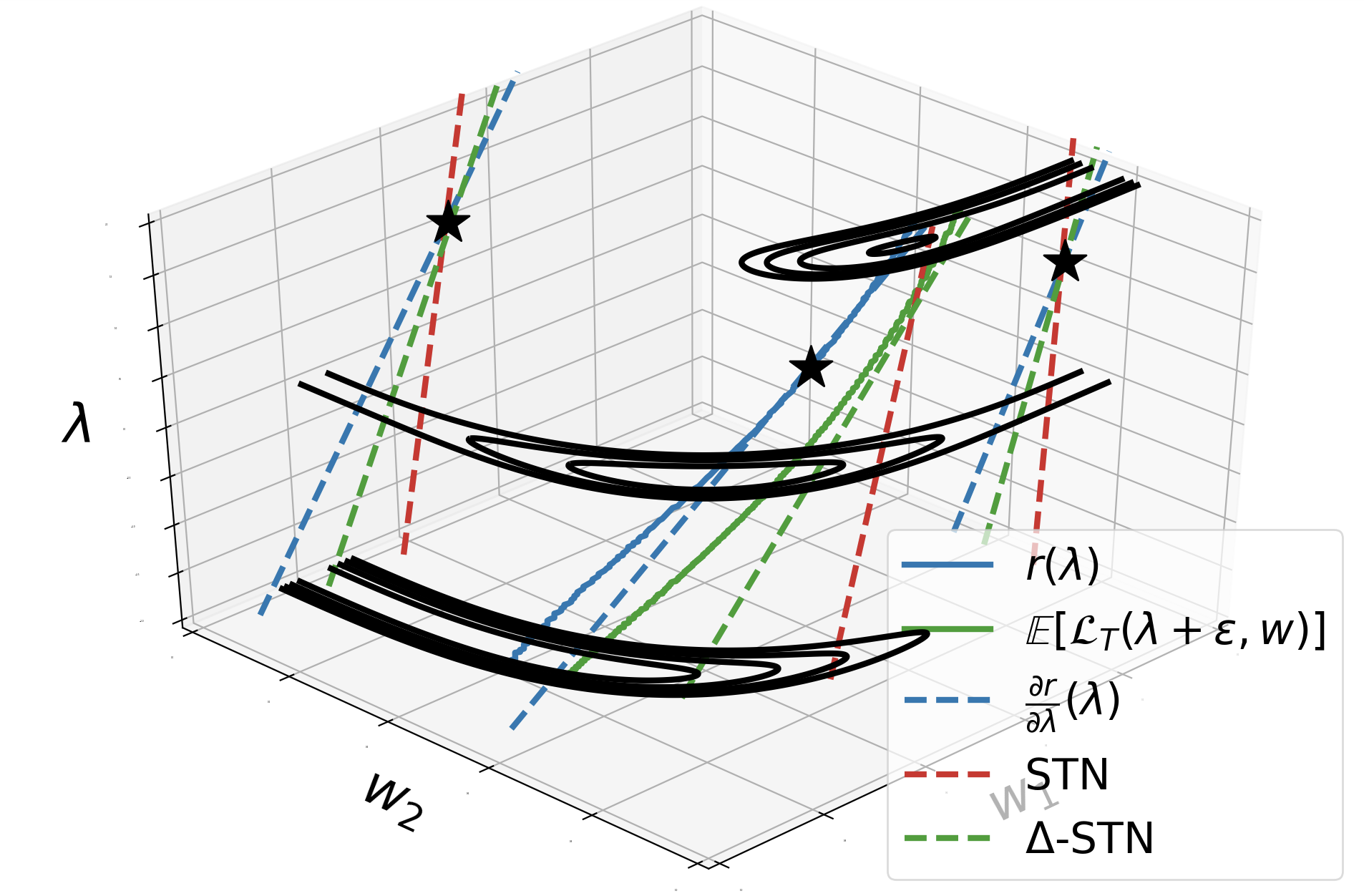}
    \vspace{-0.54cm}
    \caption{A comparison of approximated best-response Jacobians obtained by STN and $\Delta$-STN at $\lambda_0$ ($\star$). The $\Delta$-STN approximates the best-response Jacobian more accurately by linearizing the best-response hypernetwork.}
    \label{fig:toy-linearization}
\end{wrapfigure}

To remove the nonlinear dependence of the predictions on $\w$, we linearize the network around the current weights $\res(\hyper_0) = \w_0$. The first-order Taylor approximation to the network computations is given by:
\begin{equation}
    \mbf{y} = f(\bx, \w, \hyper, \xi) \approx f(\bx, \w_0, \hyper, \xi) + \jac_{\by \w} (\w - \w_0),
    \label{eq:lin-network}
\end{equation}
where $f(\bx, \w, \hyper, \xi)$ denotes evaluating a network with weights $\w$ and hyperparameters $\hyper$ on input $\bx$. Here, $\xi$ denotes a source of randomness (e.g.~dropout mask) and  $\jac_{\by\w} = \sfrac{\partial \by}{\partial \w}$ is the weight-output Jacobian. This relationship can also be written with the shorthand notation $\Delta \by \approx \jac_{\by \w} \Delta \w$, where $\Delta$ denotes a small perturbation. Therefore, we refer to it as the $\Delta$-approximation, and the corresponding bilevel optimization method as the $\Delta$-STN. In the context of our method, given the hyperparameters $\hyper_0 \in \sR^h$ and the perturbation $\pert \in \sR^h$, we structure the prediction as follows: 
\begin{align}
    \mbf{y}' = f(\bx, \res_{\mbi{\theta}}(\hyper_0 + \pert, \hyper_0), \hyper_0 + \pert, \xi) &\approx f(\bx, \res_{\mbi{\theta}}(\hyper_0, \hyper_0), \hyper_0 + \pert, \xi) + \jac_{\by \w}\Delta \w \\
    &= f(\bx, \w_0, \hyper_0 + \pert, \xi) + \jac_{\by \w} \mbi{\Theta} \pert
\end{align}

In figure~\ref{fig:toy-linearization}, we show the approximated best-response Jacobian obtained by STN and $\Delta$-STN, and compare them with the true best-response Jacobian. The contours show the loss surface of the training objective at some $\lambda$ and we projected best-response Jacobian to $(\lambda, w_1)$ and $(\lambda, w_2)$ planes for a better comparison. Because of the nonlinearity around $\lambda_0$ ($\star$), the STN tries to fit a more horizontal best-response function to minimize the error given by the perturbation, degrading the accuracy of the Jacobian approximation. On the other hand, the $\Delta$-STN linearizes the best-response function at $\lambda_0$, focusing on accurately capturing the linear effects. In appendix~\ref{app:just-linear}, we show that, with some approximations, the linearization of the perturbed inner objective yields a correct approximation of the best-response Jacobian.

Evaluating $\jac_{\by \w} \Delta \w$ in Eqn.~\ref{eq:lin-network} corresponds to evaluating the directional derivative of $f$ in the direction $\Delta \w$ and can be efficiently computed using forward mode automatic differentiation (Jacobian-vector products), a core feature in frameworks such as \texttt{JAX}~\citep{jax2018github}.

\subsection{Structured Hypernetwork Representation}
\label{section:nn}
So far, the discussion has assumed a general linear hypernetwork for simplicity. However, a full linear hypernetwork would have dimension $h \times m$, which is impractical to represent if $\hyper$ is high-dimensional. Instead, we adopt an efficient parameterization analogous to that of the original STN. Considering the $i$-th layer of the neural network, whose weights and bias are $\mbf{W}^{(i)} \in \sR^{m_i \times m_{i+1}}$ and $\mbf{b}^{(i)} \in \sR^{m_{i+1}}$, we propose to structure the layer-wise best-response hypernetwork as follows\footnote{We show the structured hypernetwork representation for convolutional layers in appendix \ref{append:conv-nn}.}:
\begin{equation}
\begin{gathered}
    \mbf{W}_{\hw}^{(i)} (\hyper, \hyper_0) = \mbf{W}_{\text{general}}^{(i)} + \left(\mbf{U}^{(i)} (\hyper - \hyper_0) \right) \odot_{\text{row}} \mbf{W}_{\text{response}}^{(i)}\\
    \mbf{b}_{\hw}^{(i)} (\hyper, \hyper_0) = \mbf{b}_{\text{general}}^{(i)} + \left(\mbf{V}^{(i)} (\hyper - \hyper_0) \right) \odot \mbf{b}_{\text{response}}^{(i)},
\end{gathered}
\end{equation}
where $\mbf{U}^{(i)}, \mbf{V}^{(i)} \in \mathbb{R}^{m_{i+1} \times h}$, and $\odot_{\text{row}}$ and $\odot$ denote row-wise and element-wise multiplications. Observe that these formulas are linear in $\hyper$, so they can be seen as a special case of Eqn.~\ref{eq:new-param}, except that structure is imposed on $\boldsymbol{\Theta}$. Observe also that this architecture is memory efficient and tractable to compute, and allows parallelism: it requires $m_{i + 1} (2 m_i + h)$ and $m_{i+1} (2 + h)$ parameters to represent the weights and bias, respectively, with 2 additional matrix multiplications and element-wise multiplications in the forward pass. Both quantities are significantly smaller than the full best-response Jacobian, so the $\Delta$-STN incurs limited memory or computational overhead compared with simply training a neural network.

\subsection{Training Algorithm}
\begin{algorithm}[H]
\caption{Training Algorithm for $\Delta$-STNs}
\SetAlgoLined
% \small
\textbf{Initialize:} hypernetwork $\hw = \{\w_0, \mbi{\Theta}\}$; hyperparameters $\boldsymbol{\lambda}$; learning rates $\{\alpha_i\}_{i=1}^3$; training and validation update intervals $T_{\text{train}}, T_{\text{valid}}$, and entropy penalty $\tau$.\\
\While{not converged}{
    \For{$t = 1, ..., T_{\text{train}}$}{
    $\boldsymbol{\epsilon} \sim p(\boldsymbol{\epsilon} | \boldsymbol{\sigma})$\\
        $\w_0 \leftarrow \w_0 - \alpha_1 \nabla_{\w_0} (\trainobj (\boldsymbol{\lambda}, \res_{\hw}(\hyper, \hyper)))$\\
        $\mbi{\Theta} \leftarrow \mbi{\Theta} - \alpha_1 \nabla_{\mbi{\Theta}} (\trainobj (\boldsymbol{\lambda} + \boldsymbol{\epsilon}, \res_{\hw}(\hyper + \pert, \hyper)))$ \Comment{Linearization with forward-mode autodiff}\\
    }
    \For{$t = 1, ..., T_{\text{valid}}$}{
        $\boldsymbol{\epsilon} \sim p(\boldsymbol{\epsilon} | \boldsymbol{\sigma})$\\
        $\boldsymbol{\lambda} \leftarrow \boldsymbol{\lambda} - \alpha_2 \nabla_{\hyper} (\valobj (\hyper + \pert, \res_{\hw}(\hyper + \pert, \hyper_0))) |_{\hyper=\hyper_0}$ \\
        $\scal \leftarrow \scal - \alpha_3 \nabla_{\scal} (\valobj (\hyper + \pert, \res_{\hw} (\hyper + \pert, \hyper)) - \tau \mathbb{H}[p(\pert | \scal)])$\\
    }
}
\label{alg:dstn-training}
\end{algorithm}
The full algorithm for $\Delta$-STNs is given in Alg.~\ref{alg:dstn-training}. In comparison to STNs (Alg.~\ref{alg:stn} in appendix~\ref{appendix:algo-stn}), we use a modified training objective and a centered hypernetwork parameterization with linearization. The hyperparameters and perturbation scale are optimized using the same objectives as in STNs (Eqn.~\ref{eq:stn-val-obj}). Since hyperparameters are often constrained (e.g.~dropout rate is in between 0 and 1), we apply a fixed non-linear transformation to the hyperparameters and optimize the hyperparameters on an unconstrained domain, as detailed in appendix~\ref{app:restricted}.

\section{Related Work}
\label{sec:related-work}
Automatic procedures for finding an effective set of hyperparameters have been a prominent subject in the literature (see~\citet{feurer2019hyperparameter} for an overview). Early works have focused on model-free approaches such as grid search and random search~\citep{bergstra2012random}. Hyperband~\citep{li2016hyperband} and Successive Halving Algorithm (SHA)~\citep{li2018massively} extend on random search by using multi-armed bandit techniques~\citep{jamieson2016non} to terminate poor-performing hyperparameter configurations early. These model-free approaches are straightforward to parallelize and work well in practice. However, they rely on random procedures, not exploiting the structure of the problem.

% Similarly, Population Based Training (PBT)~\citep{jaderberg2017population} trains networks with different hyperparameters in parallel and periodically evaluates each network's performance. The weights of the under-performing networks are replaced with the weights of better-performing networks. Hyperparameters of the better-performing networks are also copied and randomly perturbed to train the network clone. 

Bayesian Optimization (BO) provides a more principled tool to optimize the hyperparameters. Given the hyperparameters $\hyper$ and the observations $\mathcal{O} = \{ (\hyper_i, s_i)\}_{i=1}^n$, where $s$ is a surrogate loss, BO models a conditional probability $p(s | \hyper, \mathcal{O})$~\citep{hutter2010sequential,bergstra2011algorithms,snoek2012practical,snoek2015scalable}. The observations are constructed in an iterative manner, where the next set of hyperparameters to train the model is the one that maximizes an acquisition function $C(\hyper; p(s | \hyper, \mathcal{O}))$, which trades off exploitation and exploration. The training through convergence may be avoided under some assumptions on the learning curve behavior~\citep{swersky2014freeze, klein2016learning}. Nevertheless, BO requires building inductive bias into the conditional probability, is sensitive to the parameters of the surrogate model, and most importantly, does not scale well with the number of hyperparameters.

In comparison to black-box optimization, the use of gradient information can provide a drastic improvement in convergence~\citep{nesterov1998introductory}. There are two major approaches to gradient-based hyperparameter optimization. The first method uses the implicit function theorem to obtain the best-response Jacobian $\sfrac{\partial \res}{\partial \hyper}$~\citep{larsen1996design,bengio2000gradient,pedregosa2016hyperparameter,lorraine2019optimizing}, which requires approximating the Hessian (or Gauss-Newton) inverse. The second approach approximates the best-response function $\res$ by casting the inner objective as a dynamical system~\citep{domke2012generic,maclaurin2015gradient,luketina2016scalable,franceschi2018bilevel,shaban2018truncated} and applying automatic differentiation to compute the best-response Jacobian. Both approaches are computationally expensive: implicit differentiation requires approximating the inverse Hessian and unrolled differentiation needs to backpropagate through the whole gradient steps.

In contrast to implicit differentiation and unrolling, the hypernetwork approach~\citep{lorraine2018stochastic} such as Self-Tuning Networks (STNs)~\citep{mackay2019self} incurs little computation and memory overhead, as detailed in section \ref{sec:background-hypernetwork}. Moreover, it is straightforward to implement in deep learning frameworks and is able tune discrete (e.g.~number of Cutout holes~\citep{devries2017improved}) and non-differentiable (e.g.~dropout rate) hyperparameters. However, the range of applicability to general bilevel problems is slightly more restricted, as hypernetwork approach requires a single inner objective and requires that the outer variables parameterize the training objective (like implicit differentiation but unlike unrolling).

% \RBG{I think we can cut this paragraph since it's redundant.} In contrast to implicit differentiation and unrolling, the hypernetwork approach~\citep{lorraine2018stochastic} such as Self-Tuning Networks (STNs)~\citep{mackay2019self}, as detailed in section \ref{sec:background-hypernetwork}, incurs little computation and memory overhead as they amortize the inner optimization and the response Jacobian. Moreover, it is straightforward to implement in deep learning frameworks and is able tune discrete (e.g.~number of Cutout holes~\citep{devries2017improved}) and non-differentiable (e.g.~dropout rate) hyperparameters. However, the range of applicability to general bilevel problems is slightly more restricted, as hypernetwork approach requires a single inner objective and requires that the outer variables parameterize the training objective (like implicit differentiation but unlike unrolling).

\section{Experiments}
\label{sec:experiments}
In this section, a series of experiments was conducted to investigate the following questions: (1) How does our method perform in comparison to the STN in terms of convergence, accuracy, and stability? (2) Does our method scale well to modern-size convolutional neural network? (3) Can our method be extended to other architectures such as recurrent neural networks? 

We denote our method with the centered parameterization and the modified training objective as ``centered STN'' (sections~\ref{sec:method-centered-param}, \ref{sec:method-modified-objective}), and centered STN with linearization as ``$\Delta$-STN'' (section  \ref{sec:method-linearization}).

\subsection{Toy Problems}
\label{sec:toy-experiments}
We first validated the $\Delta$-STN on linear regression and linear networks, so that the optimal weights and hyperparameters could be determined exactly. We used regression datasets from the UCI collection~\citep{Dua:2019}. For all experiments, we fixed the perturbation scale to 1, and set $T_{\text{train}} = 10$ and $T_{\text{valid}} = 1$. We compared our method with STNs and the optimal solution to the bilevel problem. We present additional results and a more detailed experimental set-up at appendix~\ref{exp-toy}.

\begin{figure}[t]
\vspace{-5mm}
\small
\centering
\includegraphics[width=1.\textwidth]{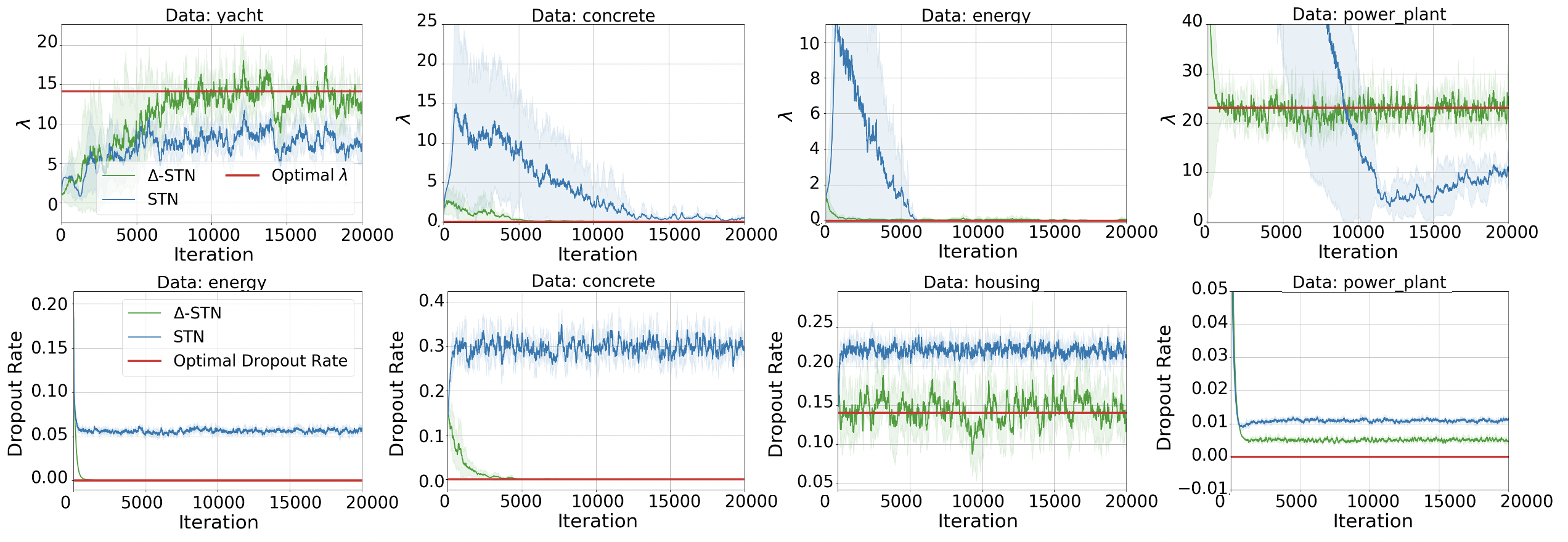}
\vspace{-3mm}
\caption{A comparison of STNs and $\Delta$-STNs on linear regression tasks (the closer to the optimal, the better). We separately optimize \textbf{(top)} weight decay and \textbf{(bottom)} input dropout rate. For both tasks, $\Delta$-STNs achieve faster convergence, higher accuracy, and improved stability compared to STNs.}
\vspace{-4mm}
\label{fig:linear-regression-result}
\end{figure}

\paragraph{Linear Regression.} We separately trained linear regression models with $L_2$ regularization and with input dropout. The trajectories for each hyperparameter are shown in~\figref{fig:linear-regression-result}. By reparameterizing the hypernetwork and modifying the training objective, the $\Delta$-STN consistently achieved faster convergence, higher accuracy, and improved stability compared to the STN.

\begin{wrapfigure}[9]{R}{0.50\textwidth}
    \centering
    \vspace{-0.6cm}
    \includegraphics[width=0.50\textwidth]{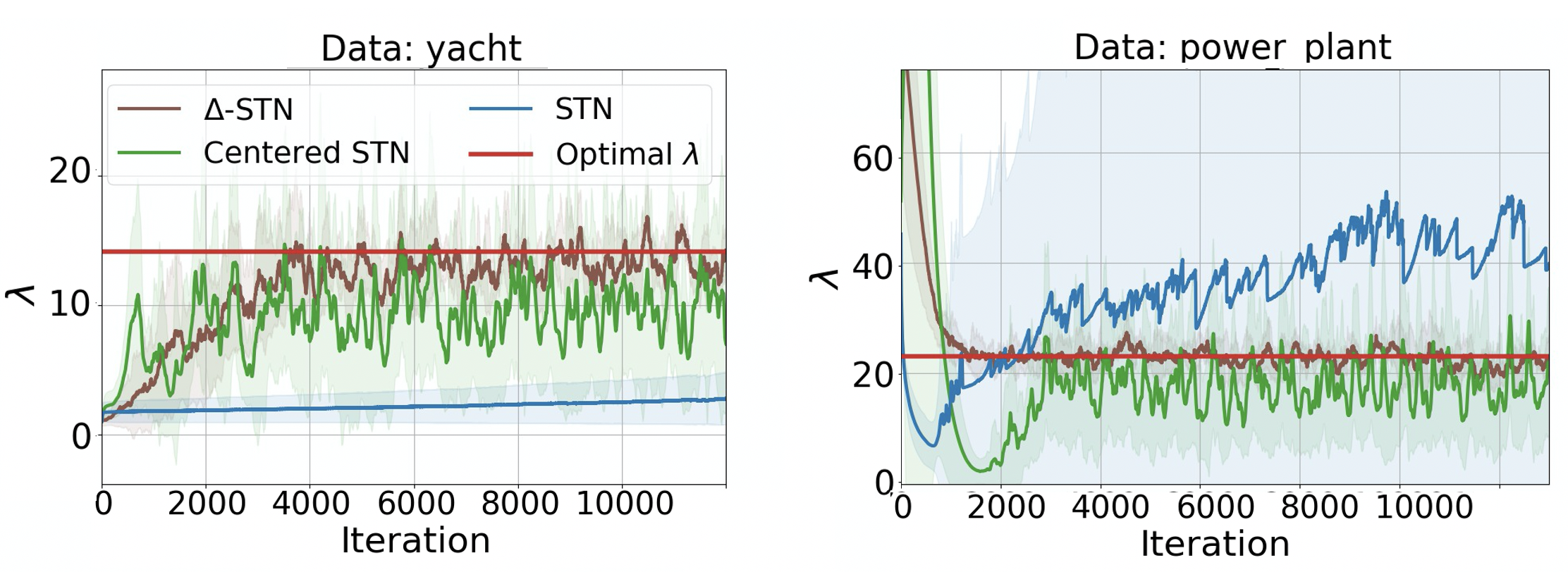}
    \vspace{-0.6cm}
    \caption{A comparison of hyperparameter updates found by STNs, centered STNs, and $\Delta$-STNs on linear network with Jacobian norm regularization.}
    \label{fig:linear-network}
\end{wrapfigure}
\paragraph{Linear Networks.} Next, we trained a 5 hidden layer linear network with Jacobian norm regularization. To show the effectiveness of linearization, we present results with and without linearizing the hypernetwork. In~\figref{fig:linear-network}, the centered STN converges to the optimal $\lambda$ more accurately and efficiently than STNs. The linearization further helped improving the accuracy and stability of the approximation.

\subsection{Image Classification}
\begin{wrapfigure}[10]{R}{0.5\textwidth}
    \centering
    \vspace{-0.5cm}
    \includegraphics[width=0.5\textwidth]{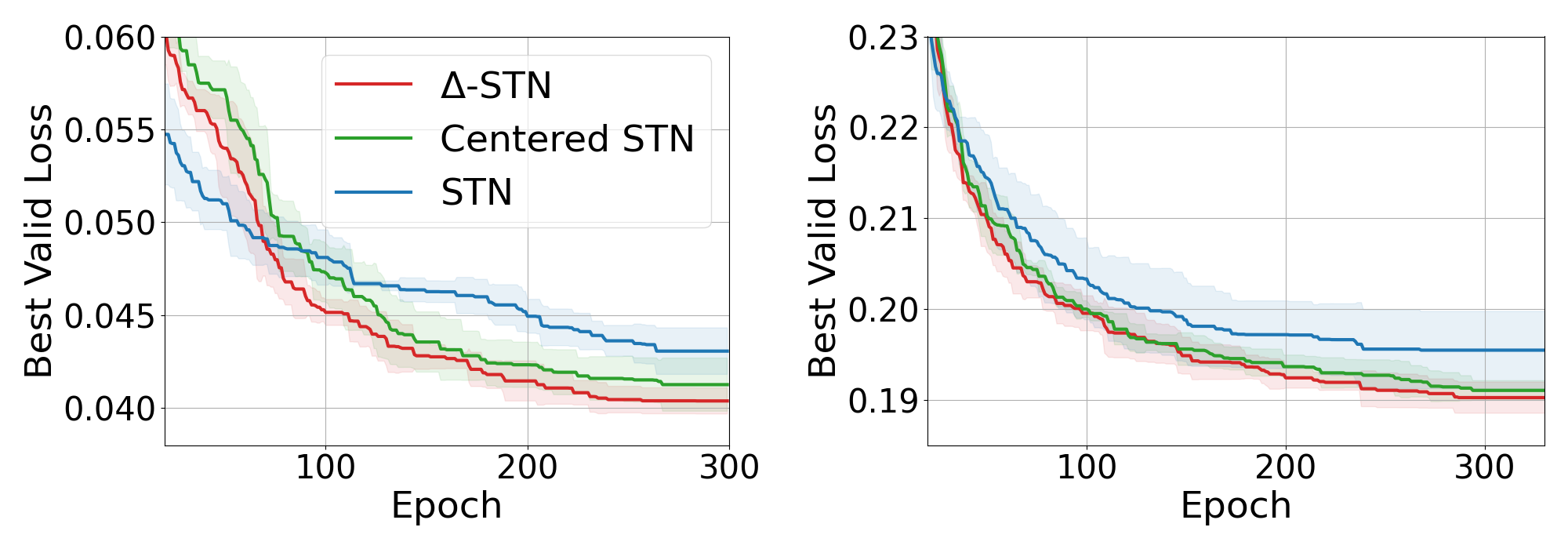}
    \vspace{-0.5cm}
    \caption{A comparison of best validation loss obtained by STNs, centered STNs, and $\Delta$-STNs for \textbf{(left)} MNIST and \textbf{(right)} FashionMNIST datasets. }
    \label{fig:mnist-result}
\end{wrapfigure}
To evaluate the scalability of our proposed architecture to deep neural networks, we applied $\Delta$-STNs to image classification tasks. We set $T_{\text{train}} = 5$ and $T_{\text{valid}} = 1$ for all experiments and compared $\Delta$-STNs to random search (RS)~\citep{bergstra2012random}, Bayesian optimization (BO)~\citep{snoek2012practical, snoek2015scalable}, and Self-Tuning Networks (STNs)~\citep{mackay2019self}. The final performances on validation and test losses are summarized in table~\ref{tab:classification}. Our $\Delta$-STN achieved the best generalization performance for all experiments, demonstrating the effectiveness of our approach. The details of the experiment settings and additional results are provided in appendix~\ref{exp-image-class}. Moreover, we show that $\Delta$-STNs are more robust to hyperparameter initialization and perturbation scale in appendix~\ref{appendix:additional-results}. 

\paragraph{MNIST \& FashionMNIST.} 
We applied $\Delta$-STNs on MNIST~\citep{lecun1998gradient} and FashionMNIST~\citep{xiao2017/online} datasets. For MNIST, we trained a multilayer perceptron with 3 hidden layers of rectified units~\citep{nair2010rectified,glorot2011deep}. We tuned 3 dropout rates that control the input and per-layer activations. For FashionMNIST, we trained a convolutional neural network composed of two convolution layers with 32 and 64 filters, followed by 2 fully-connected layers. In total 6 hyperparameters were optimized: input dropout, per-layer dropouts, and Cutout holes and length~\citep{devries2017improved}. Both networks were trained for 300 epochs and a comparison was made between STNs and $\Delta$-STNs in terms of the best validation loss achieved by a given epoch as shown in~\figref{fig:mnist-result}. The $\Delta$-STNs was able to achieve a better generalization with faster convergence. 

\paragraph{CIFAR-10.} Finally, we evaluated $\Delta$-STNs on the CIFAR-10 dataset~\citep{krizhevsky2009learning}. We used the AlexNet~\citep{krizhevsky2012imagenet}, VGG16~\citep{simonyan2014very}, and ResNet18~\citep{he2016deep} architectures. For all architectures, we tuned (1) input dropout, (2) per-layer dropouts, (3) Cutout holes and length, and (4) amount of noise applied to hue, saturation, brightness and contrast to the image, (5) random scale, translation, shear, rotation applied to the image, resulting in total of 18, 26, and 19 hyperparameters. Figure~\ref{fig:alexnet} shows the best validation loss achieved by each method over time and the hyperparameter schedules prescribed by $\Delta$-STNs for AlexNet. $\Delta$-STNs achieved the best generalization performance compared to other methods, in less time.
\begin{figure}[t]
     \subfigure[Time comparison]{\includegraphics[width=0.33\textwidth]{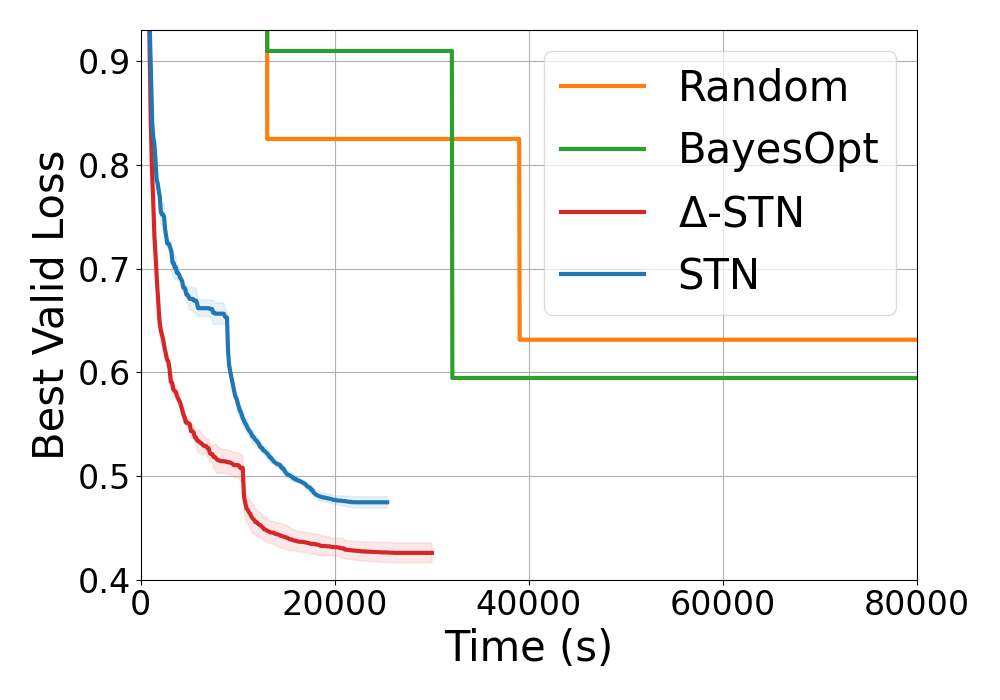}\label{fig:alexnet-time}}
     \subfigure[Schedule for dropouts]{\includegraphics[width=0.33\textwidth]{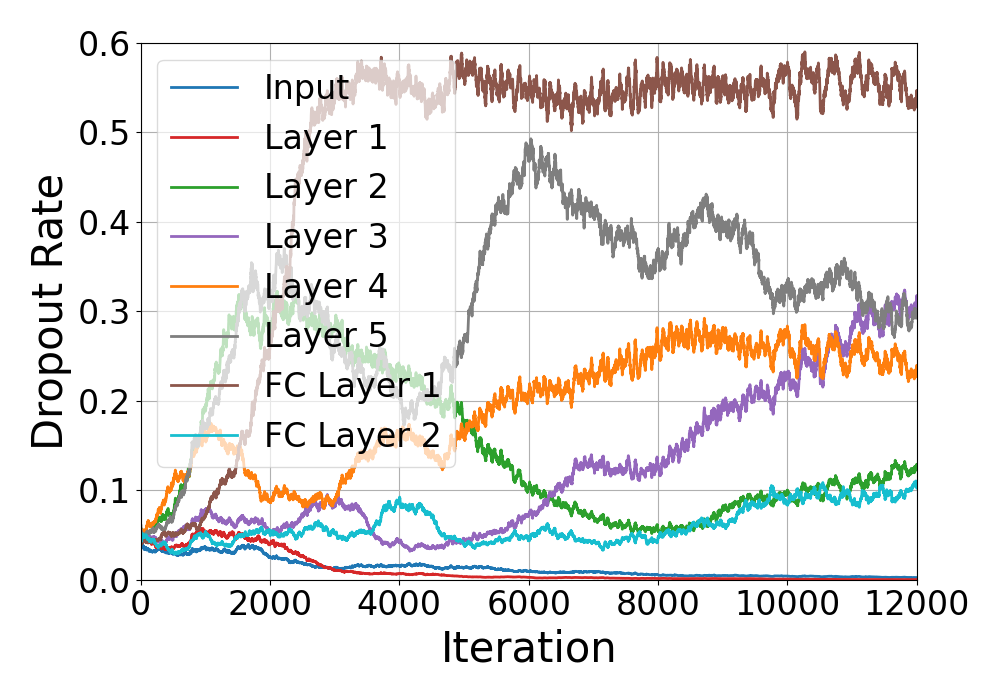}\label{fig:alexnet-dropout}}
     \subfigure[Schedule for data augmentation]{\includegraphics[width=0.33\textwidth]{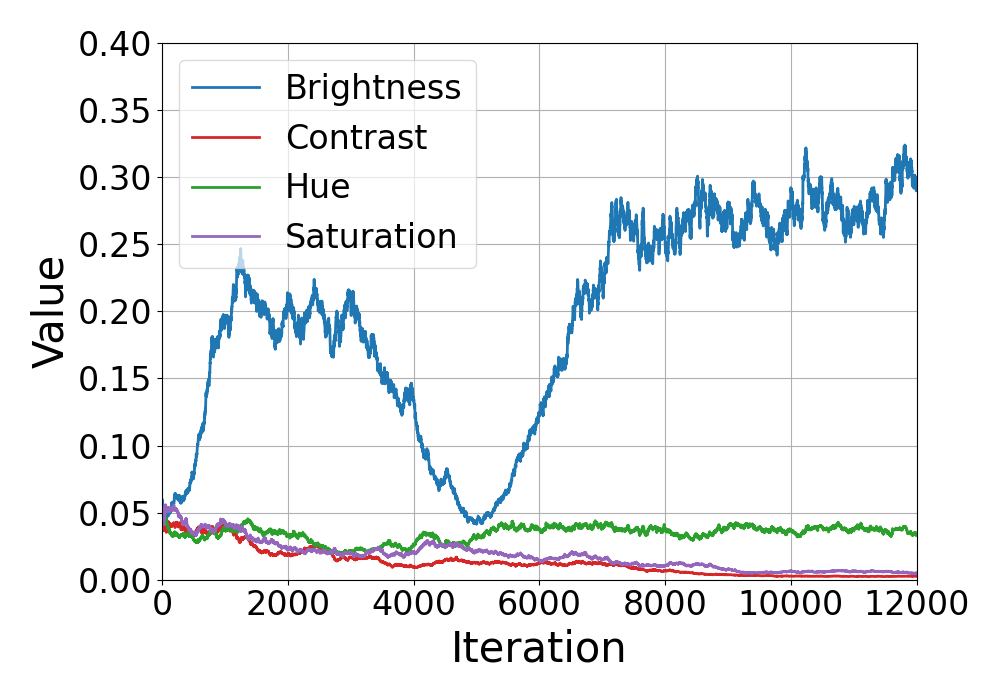}\label{fig:alexnet-data-aug}}
     \caption{\textbf{(a)} A comparison of the best validation loss achieved by random search, Bayesian optimization, STNs, and $\Delta$-STNs over time for AlexNet. $\Delta$-STNs achieved the lowest validation loss more efficiently than other methods. \textbf{(b)}, \textbf{(c)}: Hyperparameter schedules found by $\Delta$-STNs for dropout rates and data augmentation parameters.}
     \label{fig:alexnet}
     \vspace{-2mm}
\end{figure}
 
\begin{center}
\begin{table}[t]
\small
% \vspace{-0.6cm}
\caption{Final validation (test) losses / perplexities of each method on the image classification tasks and the PTB word-level language modeling task.}
% \vspace{-0.3cm}
\label{tab:classification}
\begin{center}
\centering
\resizebox{\textwidth}{!}{
\begin{tabular}{l|l|c|c|c|c|c}
\toprule
Dataset & Network & RS & BO & STN & Centered STN & $\Delta$-STN\\ \hline
MNIST & MLP  & 0.043 (0.042) & 0.042 (0.043) & 0.043  (0.041) & 0.041  (0.039) & \textbf{0.040} (\textbf{0.038}) \\ \hline
FMNIST & SimpleCNN & 0.206 (0.214) & 0.217 (0.215) & 0.196 (0.218) & 0.191 (0.212) & \textbf{0.189} (\textbf{0.209}) \\ \hline
\multirow{3}{*}{CIFAR10} & AlexNet & 0.631 (0.671) & 0.594 (0.598) & 0.474 (0.488) & 0.431 (0.450) & \textbf{0.425} (\textbf{0.446}) \\ \cline{2-7} 
& VGG16 &  0.566 (0.595) & 0.421 (0.446) & 0.330 (0.354)&  0.286 (0.321)       &    \textbf{0.272} (\textbf{0.296}) \\ \cline{2-7} 
& ResNet18 & 0.264 (0.298)  & 0.230 (0.267) & 0.266 (0.312)&  0.222 (0.258)       &    \textbf{0.204} (\textbf{0.238}) \\ \cline{2-7} \hline
PTB & LSTM & 84.81 (81.46) & 72.13 (69.29) & 70.67 (67.78) & 69.40 (66.67) & \textbf{68.63} (\textbf{66.26}) \\
\bottomrule
\end{tabular}
}
\end{center}
\vspace{-0.3cm}
\end{table}
\end{center}

\subsection{Language Modelling}
To demonstrate that $\Delta$-STNs can be extended to different settings, we applied our method to an LSTM network on the Penn Tree Bank (PTB) dataset~\citep{marcus1993building}, which has been popularly used for RNN regularization benchmarks because of its small size~\citep{gal2016theoretically,merity2017regularizing,wen2018flipout,mackay2019self}. We trained a model consisted of 2 LSTM layers and tuned 7 hyperparameters: (1) variational dropout on the input, hidden state, and output, (2) embedding dropout, (3) Drop-Connect~\citep{wan2013regularization}, and (4) activation regularization coefficients. A more detailed explanation on role of each hyperparameter and the experiment set-up can be found in appendix~\ref{exp:language}. The final validation and test perplexities achieved by each method are shown in Table~\ref{tab:classification}. The $\Delta$-STN outperforms other baselines, achieving the lowest validation and test perplexities.

\section{Conclusion}
We introduced $\Delta$-Self-Tuning Networks ($\Delta$-STNs), an improved hypernetwork architecture that efficiently optimizes hyperparameters online. We showed that $\Delta$-STNs fix subtle pathologies in training STNs by (1) reparameterizing the hypernetwork, (2) modifying the training objectives, and (3) linearzing the best-response hypernetwork. The key insight was to accurately approximate the best-response Jacobian instead of the full best-response function. Empirically, we demonstrated that $\Delta$-STNs achieve better generalization in less time, compared to existing approaches to bilevel optimization. We believe that $\Delta$-STNs offer a more reliable, robust, and accurate deployment of hyperparameter optimization based on hypernetwork approach, and offer an alternative method to efficiently solve bilevel optimization problems. 

\newpage
\section*{Acknowledgements and Funding Disclosure}
This work was supported by a grant from LG Electronics and a Canada Research Chair. RG acknowledges support from the CIFAR Canadian AI Chairs program. Part of this work was carried out when RG was a visitor at the Institute for Advanced Study in Princeton, NJ. We thank Xuchan Bao, David Duvenaud, Jonathan Lorraine, Matthew MacKay, and Paul Vicol for helpful discussions. 

\section*{Broader Impact}
Most application of deep learning involves regularization hyperparameters, and hyperparameter tuning is one stage of a much longer pipeline. Hence, any discussion of societal impacts would necessarily be speculative. One predictable impact of this work is to lessen the need for massive computing resources to tune hyperparameters.

% \small
\bibliographystyle{abbrvnat}
\bibliography{neurips_2020}

\normalsize
\appendix
\newpage

\section{Table of Notation}
\label{appendix:notation}

\begin{table}[h]
    \centering
    \noindent\setlength\tabcolsep{4pt}\setlength{\extrarowheight}{5pt}
    \begin{tabularx}{0.9\linewidth}{c|*{1}{>{\raggedright\arraybackslash}X}}
         \textbf{Notation} & \textbf{Description} \\
         \hline
        $\hyper, \w$ & Hyperparameters and parameters\\
        $\mathcal{L}_T, \mathcal{L}_V$ & Training and validation objectives\\
        $h$ & Number of hyperparameters \\
        $m$ & Number of parameters\\
        $p$ & Number of hypernetwork parameters\\
        $n$ & Number of training data\\
        $v$ & Number of validation data\\
        $\res(\hyper)$ & Best-response function\\
        $\sfrac{\partial \res}{\partial \hyper}$ & Best-response Jacobian\\
        $\pert$ & Perturbation vector\\
        $\scal$ & Perturbation scale\\
        $p(\pert | \scal)$ & Gaussian distribution with mean $\mbf{0}$ and covariance $\text{diag}(\scal)$ \\
        $\mbi{\phi}$ & STN's parameters\\
        $\res_{\mbi{\phi}}$ & Parametric best-response function with STN's parameterization\\
        $\mathbb{H}$ & Entropy term \\
        $\tau$ & Entropy weight \\
        $\mbi{\theta}$ & $\Delta$-STN's parameters \\
        $\res_{\mbi{\theta}}$ & Parametric best-response Jacobian with $\Delta$-STN's parameterization \\
        $\jac_{\by \w}$ &  Weight-output Jacobian \\
        $\mbf{H}$ & Hessian \\
        $\mbf{G}$ & Gauss-Newton Hessian \\
        $\kappa$ & Condition number \\
        $\otimes$ & Kronecker product \\
        $\alpha$ & Learning rate \\
        $\mbf{y}$ & Network's prediction \\
        $f$ & Network function\\
        $\odot, \odot_{\text{row}}$ & Element-wise multiplication and row-wise multiplication \\
        $T_{\text{train}}, T_{\text{valid}}$ & Training and validation update intervals \\
        $\mbf{X}, \mbf{t}$ & Training data and target vector \\
        $\w^*$ & Optimal parameters \\
        $s$ & Fixed non-linear transformation on hyperparameters \\
        \hline
    \end{tabularx}
	\vspace{1em}
    \caption{A summary of notations used in this paper.}
    \label{tab:notation}
\end{table}

\newpage

\section{Training Algorithm for Self-Tuning Networks (STNs)}
\label{appendix:algo-stn}
In this section, we present the training algorithm for Self-Tuning Networks. The full algorithm is shown in Alg.~\ref{alg:stn}.

\begin{algorithm}[H]
\label{alg:method}
\SetAlgoLined
\textbf{Initialize:} Best-response parameters $\boldsymbol{\phi} = \{\mbi{\phi}_0, \mbi{\Phi} \}$; hyperparameters $\boldsymbol{\lambda}$; learning rates $\{\alpha_i\}_{i=1}^3$; training and validation update intervals $T_{\text{train}}$ and $T_{\text{valid}}$; entropy weight $\tau$.\\
 \While{not converged}{
    \For{$t = 1, ..., T_{\text{train}}$}{
             $\boldsymbol{\epsilon} \sim p(\boldsymbol{\epsilon} | \boldsymbol{\sigma})$\\
    $\boldsymbol{\phi} \leftarrow \boldsymbol{\phi} - \alpha_1 \nabla_{\mbi{\phi}} (\mathcal{L}_T (\boldsymbol{\lambda} + \pert, r_{\boldsymbol{\phi}}(\hyper + \pert)))$ \\
    }
    \For{$t = 1, ..., T_{\text{valid}}$}{
        $\boldsymbol{\epsilon} \sim p(\boldsymbol{\epsilon} | \boldsymbol{\sigma})$\\
        $\hyper \leftarrow \boldsymbol{\lambda} - \alpha_2 \nabla_{\hyper} (\mathcal{L}_V (\hyper + \pert, r_{\boldsymbol{\phi}}(\hyper + \pert)))$\\
        $\scal \leftarrow \scal - \alpha_3 \nabla_{\scal} (\mathcal{L}_V (\hyper + \pert, r_{\boldsymbol{\phi}}(\hyper + \pert)) - \tau \mathbb{H}[p(\boldsymbol{\epsilon} | \boldsymbol{\sigma})])$
    }
}
\caption{Training Algorithm for Self-Tuning Networks (STNs)~\citep{mackay2019self}}
\label{alg:stn}
\end{algorithm}

\section{Proofs}
\subsection{Observation~\ref{obs:gnhhw}}
\label{obs:gnhhw-proof}
\newtheorem*{O1}{Observation~\ref{obs:gnhhw}}
\begin{O1}
The Gauss-Newton Hessian with respect to the hypernetwork is given by:
\begin{equation}
    \gnHess_\stnhwVec = \mathbb{E} \left[\hat{\hyper} \hat{\hyper}^{\top} \otimes \jac_{\by\w}^{\top} \hess_\by \jac_{\by\w} \right],
\end{equation}
where $\hat{\hyper} = \bar{\hyper} + \bar{\pert}$ are the sampled hyperparameters and $\bar{\pert}$ is the perturbation vector appended with additional homogeneous coordinate with value 0.
\end{O1} 
\begin{proof}
The hypernetwork-output Jacobian is as follows:
\begin{align}
\mbf{J}_{\mbf{y} \stnhwVec} = \hat{\hyper}^{\top} \otimes \jac_{\by\w}
\end{align}
Then, the Gauss-Newton Hessian with respect to the hypernetwork is:
\begin{align}
\mbf{G}_{\stnhwVec} &= \mathbb{E} \left[\mbf{J}_{\mbf{y} \stnhwVec}^{\top} \mbf{H}_{\mbf{y}} \mbf{J}_{\mbf{y} \stnhwVec} \right]\\
&= \mathbb{E}[(\hat{\hyper} \otimes \jac_{\by\w}^{\top})(\mbf{1} \otimes \mbf{H}_{\mbf{y}})(\hat{\hyper}^{\top} \otimes \jac_{\by\w})]\\
&= \mathbb{E}[\hat{\hyper} \hat{\hyper}^{\top} \otimes \jac_{\by\w}^{\top} \mbf{H}_{\mbf{y}} \jac_{\by\w}],
\end{align}
where $\mbf{1}$ is a $1 \times 1$ matrix with entry 1.
\end{proof}

\subsection{Observation~\ref{obs:condition-kfac}}
\label{obs:condition-kfac-proof}
\newtheorem*{O2}{Observation~\ref{obs:condition-kfac}}
\begin{O2}
Let $\kappa(\mbf{A})$ denote the condition number of a square positive definite matrix $\mbf{A}$. Given square positive definite matrices $\mbf{A}$ and $\mbf{B}$, the condition number of $\mbf{A} \otimes \mbf{B}$ is given by $\kappa ( \mbf{A} \otimes \mbf{B} )=\kappa(\mbf{A}) \kappa(\mbf{B})$. 
\end{O2} 
\begin{proof}
Let $\mbf{A}$ and $\mbf{B}$ be square positive definite matrices. Then, we get:
\begin{align}
    \kappa(\mbf{A} \otimes \mbf{B}) &= \norm{\mbf{A} \otimes \mbf{B}} \norm{\mbf{A}^{-1} \otimes \mbf{B}^{-1}} \\
    &= \norm{\mbf{A}} \norm{\mbf{B}} \norm{\mbf{A}^{-1}} \norm{\mbf{B}^{-1}} \\
    &= \kappa(\mbf{A}) \kappa(\mbf{B})
\end{align}
\end{proof}

\subsection{Theorem \ref{thm-quad-min}}
\label{thm-quad-proof}

\newtheorem*{T1}{Theorem~\ref{thm-quad-min}}
\begin{T1}
  Suppose $\mathcal{L}_T$ is quadratic with $\sfrac{\partial^2 \mathcal{L}_T}{\partial \w^2} \succ 0$ and let $p(\pert | \scal)$ be a diagonal Gaussian distribution with mean $\mbf{0}$ and variance $\sigma^2 \mbf{I}$. Fixing $\hyper_0 \in \mathbb{R}^h$ and $\w_0 \in \mathbb{R}^m$, the solution to the objective in Eqn.~\ref{eq:new-objective_hw1} is the best-response Jacobian.
\end{T1} 
\begin{proof}

Consider a quadratic function $\mathcal{L}_T \colon \mathbb{R}^h \times \mathbb{R}^m \to \mathbb{R}$ defined as:
\begin{align}
    \mathcal{L}_T(\hyper, \w) &= \frac{1}{2} \begin{pmatrix} 
\w^{\top} & \hyper^{\top}
\end{pmatrix}
\begin{pmatrix} 
\mbf{A} & \mbf{B} \\
\mbf{B}^{\top} & \mbf{C} 
\end{pmatrix}
\begin{pmatrix} 
\w  \\
\hyper  
\end{pmatrix} + \mbf{d}^{\top} \w + \mbf{e}^{\top} \hyper + c,
\end{align}
where $\mbf{A} \in \mathbb{R}^{m \times m}, \mbf{B}  \in \mathbb{R}^{m \times h}, \mbf{C}  \in \mathbb{R}^{h \times h}, \mbf{d}  \in \mathbb{R}^{m}, \mbf{e} \in \mathbb{R}^{h}$, and $c \in \mathbb{R}$. By assumption, $\mbf{A}$ is a positive definite matrix. Fixing $\hyper_0 \in \mathbb{R}^h$, the optimal weight is given by:
\begin{align}
    \w^* = - \mbf{A}^{-1} (\mbf{B} \hyper_0 + \mbf{d})
\end{align}
and the best-response function is:
\begin{align}
    \res (\hyper) = -\mbf{A}^{-1} (\mbf{B} \hyper + \mbf{d})
\end{align}
Consequently, the best-response Jacobian is as follows:
\begin{align}
    \frac{\partial \res}{\partial \hyper} (\hyper_0) = - \mbf{A}^{-1} \mbf{B} \in \mathbb{R}^{m \times h}
\end{align}
Given $\w_0 \in \mathbb{R}^m$, formulating the objective as in Eqn~\ref{eq:new-objective_hw1}, we have:
\begin{align}
    &\mathbb{E}_{\pert \sim p(\pert | \scal)} \left[ \mathcal{L}_T (\hyper_0 + \pert, \w_0 + \mbi{\Theta} \pert) \right]\\
    &= \mathbb{E}_{\pert \sim p(\pert | \scal)} \left[\frac{1}{2} (\w_0 + \mbi{\Theta} \pert)^{\top} \mbf{A} (\w_0 + \mbi{\Theta} \pert)  + (\w_0 + \mbi{\Theta} \pert)^{\top} \mbf{B} (\hyper_0 + \pert) \right. \\
    &\quad\quad\quad\quad\quad\quad \left. + \frac{1}{2}(\hyper_0 + \pert)^{\top} \mbf{C} (\hyper_0 + \pert) + \mbf{d}^{\top} (\w_0 + \mbi{\Theta} \pert) + \mbf{e}^{\top} (\hyper_0+ \pert) + c \vphantom{\frac{1}{2}} \right] \\
    &= \mathbb{E}_{\pert \sim p(\pert | \scal)} \left[\text{\circled{1}} +\text{\circled{2}} + \text{\circled{3}} + \text{\circled{4}} \right],
\end{align}
where each component is:
\begin{align}
    \text{\circled{1}} &= \frac{1}{2} (\pert^{\top} \mbi{\Theta}^{\top} \mbf{A} \mbi{\Theta} \pert + 2 \pert^{\top} \mbi{\Theta}^{\top} \mbf{A} \w_0 + \w_0^{\top} \mbf{A} \w_0) \\
    \text{\circled{2}} &= \w_0^{\top} \mbf{B} \hyper_0 + \w_0^{\top} \mbf{B} \pert + \pert^{\top} \mbi{\Theta}^{\top} \mbf{B} \hyper_0 + \pert^{\top} \mbi{\Theta}^{\top} \mbf{B} \pert\\
    \text{\circled{3}} &= \frac{1}{2} (\hyper_0^{\top} \mbf{C} \hyper_0 + 2 \pert^{\top} \mbf{C} \hyper_0 + \pert^{\top} \mbf{C} \pert) \\
    \text{\circled{4}} &= \mbf{d}^{\top} \w_0 + \mbf{d}^{\top} \mbi{\Theta} \pert + \mbf{e}^{\top} \hyper_0 + \mbf{e}^{\top} \pert + c
\end{align}
We simplify these expressions by using linearity of expectation and using the fact that $\mathbb{E}[\pert^{\top} \pert] = \sigma^2$:
\begin{align}
    \mathbb{E}_{\pert \sim p(\pert | \scal)} \left[\text{\circled{1}} \right] &= \frac{1}{2} (\text{Tr}[ \mbi{\Theta}^{\top} \mbf{A} \mbi{\Theta}]  \sigma^2+ \w_0^{\top} \mbf{A} \w_0) \\
    \mathbb{E}_{\pert \sim p(\pert | \scal)} \left[\text{\circled{2}} \right] &= \w_0^{\top} \mbf{B} \hyper_0 + \text{Tr}[\mbi{\Theta}^{\top} \mbf{B}] \sigma^2 \\
   \mathbb{E}_{\pert \sim p(\pert | \scal)} \left[\text{\circled{3}} \right] &= \frac{1}{2} ( \hyper_0^{\top} \mbf{C} \hyper_0 + \text{Tr}[\mbf{C}] \sigma^2)\\
    \mathbb{E}_{\pert \sim p(\pert | \scal)} \left[\text{\circled{4}} \right] &= \mbf{d}^{\top} \w_0 + \mbf{e}^{\top} \hyper_0 + c
\end{align}
Then, the gradient with respect to $\mbi{\Theta}$ is:
\begin{align}
    \frac{\partial }{\partial \mbi{\Theta}} \left( \mathbb{E}_{\pert \sim p(\pert | \scal)} \left[\mathcal{L}_T (\hyper_0 + \pert, \w_0 + \mbi{\Theta} \pert) \right] \right) &= \mbf{B} \sigma^2 + \mbf{A} \mbi{\Theta} \sigma^2
\end{align}
Setting the above equation equal to $\mbf{0}$, the optimal solution $\hwr^*$ is the following:
\begin{align}
    \mbi{\Theta}^* &= - \mbf{A}^{-1} \mbf{B} = \frac{\partial \res}{\partial \hyper} (\hyper_0),
\end{align}
which matches the best-response Jacobian.
\end{proof}

\subsection{Lemma~\ref{best-response-lamma}}

\begin{lemma}
\label{best-response-lamma}
Let $\hyper_0 \in \sR^h$ and choose $\w_0 \in \sR^m$ to be the solution to Eqn.~\ref{eq:new-objective_hw0}. Suppose $\trainobj \in \mathcal{C}^2$ in a neighborhood of $(\hyper_0, \w_0)$ and the Hessian $\sfrac{\partial \trainobj}{\partial \w} (\hyper_0, \w_0)$ is positive definite. Then, for some neighborhood $\mathcal{U}$ of $\hyper_0$, there exists a unique continuously differentiable function $\res \colon \mathcal{U} \to \sR^m$ such that $\res(\hyper_0) = \w_0$ and $\sfrac{\partial \mathcal{L}_T}{\partial \w} (\hyper, \res(\hyper)) = \mbf{0}$ for all $\hyper \in \mathcal{U}$. Moreover, the best-response Jacobian on $\mathcal{U}$ is as follows:
\begin{align}
    \frac{\partial \res}{\partial \hyper} (\hyper) = -  \left[ \frac{\partial^2 \mathcal{L}_T}{\partial \w^2} (\hyper, \res (\hyper)) \right]^{-1} \left( \frac{\partial^2 \mathcal{L}_T}{\partial \w \partial \hyper} (\hyper, \res(\hyper)) \right)
\end{align}
\end{lemma}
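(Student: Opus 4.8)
The plan is to recognize this statement as a direct application of the Implicit Function Theorem to the first-order stationarity condition of the inner problem. First I would define the map $G \colon \sR^h \times \sR^m \to \sR^m$ by $G(\hyper, \w) = \sfrac{\partial \trainobj}{\partial \w}(\hyper, \w)$. Since $\trainobj \in \mathcal{C}^2$ in a neighborhood of $(\hyper_0, \w_0)$, the map $G$ is $\mathcal{C}^1$ there. Because $\w_0$ is chosen to solve Eqn.~\ref{eq:new-objective_hw0}, first-order optimality gives $G(\hyper_0, \w_0) = \mbf{0}$, so $(\hyper_0, \w_0)$ is a zero of $G$.

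Next I would verify the invertibility hypothesis of the IFT. The partial Jacobian of $G$ in the $\w$ direction is exactly $\sfrac{\partial G}{\partial \w} = \sfrac{\partial^2 \trainobj}{\partial \w^2}$, which by assumption is positive definite at $(\hyper_0, \w_0)$ and hence invertible. The IFT then yields a neighborhood $\mathcal{U}$ of $\hyper_0$ and a unique $\mathcal{C}^1$ map $\res \colon \mathcal{U} \to \sR^m$ satisfying $\res(\hyper_0) = \w_0$ and $G(\hyper, \res(\hyper)) = \mbf{0}$, i.e.~$\sfrac{\partial \trainobj}{\partial \w}(\hyper, \res(\hyper)) = \mbf{0}$ for all $\hyper \in \mathcal{U}$. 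This establishes existence, uniqueness, and continuous differentiability in one stroke.

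To obtain the Jacobian formula, I would differentiate the identity $G(\hyper, \res(\hyper)) = \mbf{0}$ with respect to $\hyper$ via the chain rule, giving $\sfrac{\partial G}{\partial \hyper} + \sfrac{\partial G}{\partial \w}\, \sfrac{\partial \res}{\partial \hyper} = \mbf{0}$. Substituting $\sfrac{\partial G}{\partial \hyper} = \sfrac{\partial^2 \trainobj}{\partial \w \partial \hyper}$ and $\sfrac{\partial G}{\partial \w} = \sfrac{\partial^2 \trainobj}{\partial \w^2}$, and solving for $\sfrac{\partial \res}{\partial \hyper}$, produces the stated expression.

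The only genuine subtlety, and the step I would be most careful about, is ensuring that the inverse appearing in the formula exists not merely at $\hyper_0$ but throughout $\mathcal{U}$. This follows because positive definiteness is an open condition and the Hessian $\sfrac{\partial^2 \trainobj}{\partial \w^2}(\hyper, \res(\hyper))$ depends continuously on $\hyper$ (as $\trainobj \in \mathcal{C}^2$ and $\res$ is continuous); hence, after possibly shrinking $\mathcal{U}$, the Hessian remains positive definite, and thus invertible, along the curve $\{(\hyper, \res(\hyper)) : \hyper \in \mathcal{U}\}$, which legitimizes the chain-rule inversion above. Everything else is the routine bookkeeping already packaged inside the Implicit Function Theorem.
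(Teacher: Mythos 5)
Your proposal is correct and follows essentially the same route as the paper's proof: apply the Implicit Function Theorem to the stationarity map $\sfrac{\partial \trainobj}{\partial \w}$ at $(\hyper_0, \w_0)$, then differentiate the resulting identity $\sfrac{\partial \trainobj}{\partial \w}(\hyper, \res(\hyper)) = \mbf{0}$ and solve for $\sfrac{\partial \res}{\partial \hyper}$. The one difference is that you explicitly justify invertibility of the Hessian throughout $\mathcal{U}$ (openness of positive definiteness plus continuity of $\hyper \mapsto \sfrac{\partial^2 \trainobj}{\partial \w^2}(\hyper, \res(\hyper))$, shrinking $\mathcal{U}$ if needed), a point the paper's proof passes over silently; this is a minor strengthening of rigor, not a different argument.
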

\begin{proof}
Let $\hyper_0 \in \mathbb{R}^h$ and $\w_0 \in \mathbb{R}^m$ be the solution to Eqn.~\ref{eq:new-objective_hw0}. Suppose $\mathcal{L}_T$ is $\mathcal{C}^2$ in a neighborhood of $(\hyper_0, \w_0)$. By first-order optimality condition, we have:
\begin{align}
    \frac{\partial \mathcal{L}_T}{\partial \w} (\hyper_0, \w_0) = \mbf{0}
\end{align}
Since the Hessian is positive definite, it is invertible, and there exists a unique continuously differentiable function $\res \colon \mathcal{U} \to \sR^m$ for some neighborhood $\mathcal{U}$ of $\hyper_0$ such that $\res(\hyper_0) = \w_0$ and:
\begin{align}
    \frac{\partial \mathcal{L}_T}{\partial \w} (\hyper, \res(\hyper)) = \mbf{0}
\end{align}
for all $\hyper \in \mathcal{U}$ by implicit function theorem. Also, we have:
\begin{align}
    \mbf{0} &= \frac{\mathrm{d}}{\mathrm{d} \hyper}  \left(\frac{\partial \mathcal{L}_T}{\partial \w} (\hyper, \res(\hyper)) \right) \\
    &= \left( \frac{\partial^2 \mathcal{L}_T}{\partial \w^2} (\hyper, \res(\hyper)) \frac{\partial \res}{\partial \hyper} (\hyper)  +  \frac{\partial^2 \mathcal{L}_T}{\partial \w \partial \hyper} (\hyper, \res(\hyper)) \right)^{\top}
    \label{eq:deriv-best-response-jacob}
\end{align}
for all $\hyper \in \mathcal{U}$. Re-arranging Eqn.~\ref{eq:deriv-best-response-jacob}, we get:
\begin{align}
    \frac{\partial \res}{\partial \hyper} (\hyper) = - \left[ \frac{\partial^2 \mathcal{L}_T}{\partial \w^2} (\hyper, \res(\hyper)) \right]^{-1} \left( \frac{\partial^2 \mathcal{L}_T}{\partial \hyper \partial \w} (\hyper, \res(\hyper)) \right)
\end{align}
\end{proof}

\section{Justification for Linearizing the Best-Response Hypernetwork}
\label{app:just-linear}
Consider the inner-level objective:
\begin{align}
    \res (\hyper) = \argmin_{\w \in \mathbb{R}^m} \mathbb{E}_{\pert \sim p(\pert | \scal)} \left[ \mathcal{L}_T (\hyper + \pert, \w) \right]
\end{align}
Let $\w_0 = \res (\hyper_0)$ be the current weights and assume it is the optimal solution. Further assuming we can exchange the integral and the gradient operator, by first-order optimality condition, we get:
\begin{align}
    \frac{\partial}{\partial \w} \mathbb{E}_{\pert \sim p(\pert | \scal)} \left[ \mathcal{L}_T (\hyper_0 + \pert, \w_0) \right] = \mathbb{E}_{\pert \sim p(\pert | \scal)} \left[\frac{\partial \mathcal{L}_T}{\partial \w}  (\hyper_0 + \pert, \w_0) \right] = \mbf{0}
\end{align}
Differentiating with respect to $\hyper$, we have:
\begin{align}
    \frac{\mathrm{d}}{\mathrm{d} \hyper} \mathbb{E}_{\pert \sim p(\pert | \scal)} \left[\frac{\partial \mathcal{L}_T}{\partial \w}  (\hyper + \pert, \res(\hyper)) \right] = \mathbb{E}_{\pert \sim p(\pert | \scal)} \left[\frac{\mathrm{d}}{\mathrm{d} \hyper} \left( \frac{\partial  \mathcal{L}_T }{\partial \w }(\hyper + \pert, \res(\hyper)) \right)\right] = \mbf{0}
\end{align}
Then:
\begin{align}
    \mathbb{E}_{\pert \sim p(\pert | \scal)} \left[ \frac{\partial^2 \mathcal{L}_T}{\partial \w \partial \hyper}  (\hyper_0 + \pert, \w_0) +  \frac{\partial^2 \mathcal{L}_T}{\partial \w^2}  (\hyper_0 + \pert, \w_0) \frac{\partial \res}{\partial \hyper} (\hyper_0)  \right] = \mbf{0}
    \label{eq:brj1}
\end{align}
For simplicity, we denote:
\begin{align}
    \mbf{B}(\hyper_0, \w_0, \pert) &= \frac{\partial^2 \mathcal{L}_T}{\partial \w \partial \hyper} (\hyper_0 + \pert, \w_0) \in \mathbb{R}^{m \times h} \\
    \mbf{A} (\hyper_0, \w_0, \pert) &= \frac{\partial^2 \mathcal{L}_T}{\partial \w^2} (\hyper_0 + \pert, \w_0) \in \mathbb{R}^{m \times m} \\
    \hwr &=  \frac{\partial \res}{\partial \hyper} (\hyper_0) \in \mathbb{R}^{m \times h}
\end{align}
Thus, $\mbi{\Theta}$ is the best-response Jacobian, and it is given by:
\begin{align}
    \mbi{\Theta} = - \mathbb{E}[\mbf{A}]^{-1} \mathbb{E}[\mbf{B}]
\end{align}
We can represent the solution as a minimization problem:
\begin{align}
    \hwr^* = \argmin_{\hwr \in \mathbb{R}^{m \times h}} \mathbb{E}_{\pert \sim p(\pert | \scal)} \left[\frac{1}{2} \text{tr}[\mbf{A} \hwr \hwr^{\top}] + \text{tr}[\mbf{B}^{\top} \hwr]\right]
\end{align}
The first term in Eqn.~\ref{eq:brj1} can be represented as:
\begin{align}
    \mathbb{E}_{\pert \sim p(\pert | \scal)} \left[\mbf{B}(\hyper_0, \w_0, \pert) \right] &= \mathbb{E}_{\pert \sim p(\pert | \scal)} \left[ \frac{\partial^2 \mathcal{L}_T}{\partial \w \partial \hyper} (\hyper_0 + \pert, \w_0) \right] \\
    &= \mathbb{E}_{\tilde{\pert} \sim p(\tilde{\pert} | \mbf{I})} \left[ \frac{\partial^2 \tilde{\mathcal{L}_T}}{\partial \w \partial \tilde{\pert}} (\tilde{\pert}, \w_0) \scalm^{- 1/2}\right] \\
    \label{app:stein}
    &= \mathbb{E}_{\tilde{\pert} \sim p(\tilde{\pert} | \mbf{I})} \left[ \left(\frac{\partial \tilde{\mathcal{L}_T}}{\partial \w} (\tilde{\pert}, \w_0) \right)^{\top} \tilde{\pert}^{\top} \scalm^{-1/2} \right] \\
    &= \mathbb{E}_{\pert \sim p(\pert | \scal)} \left[ \left(\frac{\partial \mathcal{L}_T}{\partial \w} (\hyper_0 + \pert, \w_0) \right)^{\top} \pert^{\top} \scalm^{-1} \right],
    %  &= \mathbb{E}_{\pert \sim p(\pert | \scal)} \left[ \frac{\partial \mathcal{L}_T}{\partial \w} (\hyper_0 + \pert, \w_0) \pert^{\top} \scalm^{-1} \right] \\
\end{align}
where $\pert = \scalm^{1/2} \tilde{\pert}$ and $\mathcal{L}_T (\hyper_0 + \scalm^{1/2} \tilde{\pert}, \w)= \tilde{\mathcal{L}_T} (\tilde{\pert}, \w)$. The third line (Eqn.~\ref{app:stein}) uses Stein's identity. Multipying $\scalm$ in Eqn.~\ref{eq:brj1}, we have:
\begin{align}
    \mathbb{E}_{\pert \sim p(\pert | \scal)} \left[\mbf{B}(\hyper_0, \w_0, \pert) \scalm + \mbf{A}(\hyper_0, \w_0, \pert) \hwr \scalm \right] = \mbf{0}
\end{align}
with the optimization problem:
\begin{align}
    \hwr^* = \argmin_{\hwr \in \mathbb{R}^{m \times h}} \mathbb{E}_{\pert \sim p(\pert | \scal)} \left[\frac{1}{2} \text{tr}[\mbf{A} \hwr \scalm \hwr^{\top}] + \text{tr}[\mbf{B}^{\top} \hwr \scalm]\right]
    \label{eq:second-stein-lem}
\end{align}
The second term in Eqn~\ref{eq:second-stein-lem} is:
\begin{align}
    \mathbb{E}_{\pert \sim p(\pert | \scal)} \left[\text{tr}[\mbf{B}^{\top} \hwr \scalm] \right] &= \mathbb{E}_{\pert \sim p(\pert | \scal)} \left[ \text{tr}[\scalm \mbf{B}^{\top} \hwr] \right] \\
    &= \mathbb{E}_{\pert \sim p(\pert | \scal)} \left[\text{tr}\left[\scalm \scalm^{-1} \pert \frac{\partial \mathcal{L}_T}{\partial \w} (\hyper_0 + \pert, \w_0) \hwr \right]\right] \\
    &= \mathbb{E}_{\pert \sim p(\pert | \scal)} \left[ \text{tr}\left[ \frac{\partial \mathcal{L}_T}{\partial \w} (\hyper_0 + \pert, \w_0) \hwr  \pert \right] \right]\\
    &= \mathbb{E}_{\pert \sim p(\pert | \scal)} \left[\text{tr}\left[ \frac{\partial \mathcal{L}_T}{\partial \w} (\hyper_0 + \pert, \w_0) \Delta \w \right] \right]\\
    &= \mathbb{E}_{\pert \sim p(\pert | \scal)} \left[\text{tr}\left[\frac{\partial \mathcal{L}_T}{\partial \mbf{y}} (\hyper_0 + \pert, \w_0) \Delta \mbf{y} \right] \right],
\end{align}
where $\Delta \w = \hwr \pert$ and $\Delta \mbf{y} = \mbf{J}_{\mbf{y} \mbf{w}} \Delta \w$. On the other hand, the first term is:
\begin{align}
    \mathbb{E}_{\pert \sim p(\pert | \scal)} [ \text{tr} [ \mbf{A} (\hyper_0, \w_0, \pert) &\hwr \scalm \hwr^{\top} ] ] \\
    &= \mathbb{E}_{\pert \sim p(\pert | \scal)} \left[ \text{tr} \left[\frac{\partial^2 \mathcal{L}_T}{\partial \w^2} (\hyper_0 + \pert, \w_0) \hwr \scalm \hwr^{\top} \right] \right] \\
    \label{app:indep}
    &\approx \mathbb{E}_{\pert \sim p(\pert | \scal)} \left[ \text{tr} \left[\frac{\partial^2 \mathcal{L}_T}{\partial \w^2} (\hyper_0 + \pert, \w_0) \hwr \pert \pert^{\top} \hwr^{\top} \right] \right] \\
    &= \mathbb{E}_{\pert \sim p(\pert | \scal)} \left[ \text{tr} \left[\frac{\partial^2 \mathcal{L}_T}{\partial \w^2} (\hyper_0 + \pert, \w_0) \Delta \w \Delta \w^{\top} \right] \right]\\
    &= \mathbb{E}_{\pert \sim p(\pert | \scal)} \left[ \text{tr} \left[\Delta \w^{\top} \frac{\partial^2 \mathcal{L}_T}{\partial \w^2} (\hyper_0 + \pert, \w_0) \Delta \w  \right] \right]\\
    \label{app:gnm-approx}
    &\approx \mathbb{E}_{\pert \sim p(\pert | \scal)} \left[ \text{tr} \left[\Delta \w^{\top} \mbf{J}_{\mbf{y}\mbf{w}}^{\top} \frac{\partial^2 \mathcal{L}_T}{\partial \mbf{y}^2} (\hyper_0 + \pert, \w_0) \mbf{J}_{\mbf{y}\mbf{w}} \Delta \w  \right] \right] \\
    &= \mathbb{E}_{\pert \sim p(\pert | \scal)} \left[ \text{tr} \left[\Delta \mbf{y}^{\top} \frac{\partial^2 \mathcal{L}_T}{\partial \mbf{y}^2} (\hyper_0 + \pert, \w_0) \Delta \mbf{y}  \right] \right] 
\end{align}
The third line (Eqn.~\ref{app:indep}) assumes that $\sfrac{\partial^2 \mathcal{L}_T}{\partial \w^2} (\hyper_0 + \pert, \w_0)$ and $\pert$ are independent, and Eqn.~\ref{app:gnm-approx} is a Gauss-Newton approximation. Therefore, first and second terms correspond to the first- and second-order Taylor approximations to the loss. For $\Delta$-STNs, we linearized the predictions with respect to the loss. This can be explained by the fact that the loss functions such as mean squared error and cross entropy are locally quadratic and closely matches the second order approximation.

\section{Structured Hypernetwork Representation for Convolutional Layers}
\label{append:conv-nn}
In this section, we describe a structured best-response approximation for convolutional layers. Considering $i$-th layer of a convolutional neural network, let $C_i$ denote number of filters and $K_i$ denote size of the kernel. Let $\mbf{W}^{(i, c)} \in \mathbb{R}^{C_{i - 1} \times K_i \times K_i}$ and $\mbf{b}^{(i, c)} \in \mathbb{R}$ denote weights and bias at $c$-th convolutional kernel, where $c \in \{1, ..., C_i\}$. We propose to approximate the layer-wise best-response function as follows:
\begin{equation}
\begin{gathered}
    \mbf{W}_{\hw}^{(i, c)} (\hyper, \hyper_0)  = \mbf{W}_{\text{general}}^{(i, c)} + \left((\hyper - \hyper_0)^{\top} \mbf{u}^{(i, c)} \right) \odot \mbf{W}_{\text{response}}^{(i, c)}\\
    \mbf{b}_{\hw}^{(i, c)} (\hyper, \hyper_0) = \mbf{b}_{\text{general}}^{(i, c)} + \left((\hyper - \hyper_0)^{\top}\mbf{v}^{(i, c)}  \right) \odot \mbf{b}_{\text{response}}^{(i, c)},
\end{gathered}
\end{equation}
where $\mbf{u}^{(i, c)}, \mbf{v}^{(i, c)} \in \mathbb{R}^h$. Observe that these formulas are linear in $\hyper$ similar to the approximation for fully-connected layers and analogous to that of the original STN. This architecture is also memory efficient and tractable to compute, and allows parallelism. The approximation requires $2|\mbf{W}^{(i, c)}| + h$ and $2|\mbf{b}^{(i, c)}| + h$ parameters to represent the weight and bias, and two additional element-wise multiplications in the forward pass. Summing over all channels, the total number of parameters is $2p + 2 h C_i$ for each layer, where $p$ is the number parameters for the ordinary CNN layer. Thus, the $\Delta$-STN incurs little memory overhead compared to training an ordinary CNN.

\section{Fixed Nonlinear Function on Hyperparameters}
\label{app:restricted}
In general, the hyperparameters have restricted domains. For example, the weight decay has to be a positive real number and the dropout rate has to be in between 0 and 1. Hence, we apply a fixed non-linear function $s \colon \mathbb{R}^h \to \mathbb{R}^h$ on the hyperparameters to ensure that hyperparameters are in its domain and optimize the hyperparameters on an unrestricted domain. Fixing $\hyper_0 \in \mathbb{R}^h$, the training objective for the hypernetwork with a fixed nonlinear transformation on the hyperparameters is as follows:
\begin{align}
    \min_{\mbi{\Theta} \in \mathbb{R}^{m \times h}} \mathbb{E}_{\pert \sim p(\pert|\scal)} \left[ \mathcal{L}_T (s(\hyper_0 + \pert), \res_{\hw} (\hyper_0 + \pert, \hyper_0))\right]
\end{align}
We also use such transformation to restrict the hyperparameters to be in its search space.

\section{Example of STN's Training Objective having Incorrect Fixed Point}
\label{sec:linear-reg-example}
Consider a linear regression with $L_2$ regularization where the training objective is defined as:
\begin{align}
    \mathcal{L}_T (\lambda, \w) &= \frac{1}{2n} \norm{\mbf{X} \w - \mbf{t}}^2 + \frac{\lambda}{2 n} \norm{\w}^2,
\end{align}
where $\mbf{X} \in \mathbb{R}^{n \times m}$ and $\mbf{t} \in \mathbb{R}^n$ are the input matrix and target vector, respectively. Given $\lambda_0 \in \mathbb{R}$, under the STN's training objective (Eqn.~\ref{eq:lor-objective}), we aim to minimize:
\begin{align}
    \min_{\w_0 \in \mathbb{R}^m} \mathbb{E}_{\epsilon \sim p(\epsilon|\sigma)} \left[\frac{1}{2n} \norm{\mbf{X} (\w_0 + \mbi{\Theta}\epsilon) - \mbf{t}}^2 + \frac{\lambda_0 + \epsilon}{2 n} \norm{\w_0 + \mbi{\Theta}\epsilon}^2 \right]
\end{align}
Simplifying the above equation, we get:
\begin{align}
    \mathbb{E}_{\pert \sim p(\pert | \scal)} \left[\text{\circled{1}} +\text{\circled{2}} \right],
\end{align}
where each component is:
\begin{align}
    \mathbb{E}_{\epsilon \sim p(\epsilon, \sigma)} \left[\text{\circled{1}}\right] &= \frac{1}{2n} \left(\w_0^{\top} \mbf{X}^{\top} \mbf{X} \w_0 + \hwr^{\top} \mbf{X}^{\top} \mbf{X} \hwr \sigma^2 + \mbf{t}^{\top} \mbf{t} - 2 \w_0^{\top} \mbf{X}^{\top} \mbf{t} \right) \\
     \mathbb{E}_{\epsilon \sim p(\epsilon, \sigma)} \left[\text{\circled{2}}\right] &= \frac{1}{2n} \left(\w_0^{\top} \w_0 \lambda_0 + 2 \w_0^{\top} \hwr \sigma^2 + \hwr^{\top} \hwr \lambda_0 \sigma^2 \right)
\end{align}
The gradient with respect to $\w_0$ is the following:
\begin{align}
    \frac{\partial}{\partial \w_0} \mathbb{E}_{\epsilon \sim p(\epsilon|\sigma)} \left[ \mathcal{L}_T (\lambda_0 + \epsilon, \res_{\hw} (\lambda_0 + \epsilon, \lambda_0)) \right] =  \frac{1}{n} \left( \mbf{X}^{\top} \mbf{X} \w_0 - \mbf{X}^{\top} \mbf{t} + \w_0 \lambda_0 + \hwr \sigma^2 \right)
\end{align}
Setting the above equation equal to $\mbf{0}$, the optimal solution $\w_0^*$ under STN's training objective is as follows:
\begin{align}
    \w_0^* = (\mbf{X}^{\top} \mbf{X} + \lambda_0 \mbf{I})^{-1} (\mbf{X}^{\top} \mbf{t} - \hwr \sigma^2)
    \label{eq:linear-reg-incorrect}
\end{align}
and  the optimal $\hwr^*$ is:
\begin{align}
    \hwr^* = - (\mbf{X}^{\top} \mbf{X} + \lambda_0 \mbf{I})^{-1} \mbf{w}_0^*,
    \label{eq:linear-reg-l2-jacob}
\end{align}
Comparing Eqn.~\ref{eq:linear-reg-incorrect} to the optimal weight $\w^*$ of linear regression with $L_2$ regularization,
\begin{align}
    \w^* = (\mbf{X}^{\top} \mbf{X} + \lambda_0 \mbf{I})^{-1} \mbf{X}^{\top} \mbf{t},
\end{align}
the optimal solution for $\w_0^*$ under STN's training objective is incorrect, as $\sigma^2 > 0$. Moreover, the inaccuracy of $\w_0^*$ also affects accuracy of the best-response Jacobian as shown in Eqn.~\ref{eq:linear-reg-l2-jacob}. In contrast, the proposed objective in Eqn.~\ref{eq:new-objective_hw0} recovers the correct solution for both the weight $\w_0^*$ and best-response Jacobian $\hwr^*$.

\section{Experiment Details}
In this section, we present additional details for each experiment.

\subsection{Toy Experiments}
\label{exp-toy}
For toy experiments, the datasets were randomly split into training and validation set with ratio 80\% and 20\%. We fixed the perturbation scale to 1, and used $T_{\text{train}} = 10$ and $T_{\text{valid}} = 1$ for all experiments. We further used 100 iterations of warm-up that does not update the hyperparameters. Note that the warm-up still perturbs the hyperparameters. The batch size was 10 for datasets with less than 1000 data points, and 100 for others. In training, we normalized the input features and targets to have a zero mean and unit variance. The training objective for the ridge regression was as follows:
\begin{align}
 \mathcal{L}_T (\lambda, \w) = \frac{1}{2n} \norm{\mbf{X} \mbf{w} - \mbf{t}}^2 + \frac{\exp(\lambda)}{2n} \norm{\w}^2,
 \label{eq:linear-reg-obj}
\end{align}
where $n$ is the number of training data, $\mbf{X} \in \mathbb{R}^{n \times m}$ is the input matrix and $\mbf{t} \in \mathbb{R}^n$ is the target. We initialized the regularization penalty $\lambda$ to $1$ for datasets with less than 1000 data points and 50 for others. The hyperparameter updates for linear regression with $L_2$ regularization on UCI datasets is shown in figure~\ref{fig:comparison_uci_linear_reg}.

\begin{figure}[h!]
\centering
 \includegraphics[width=0.30\textwidth]{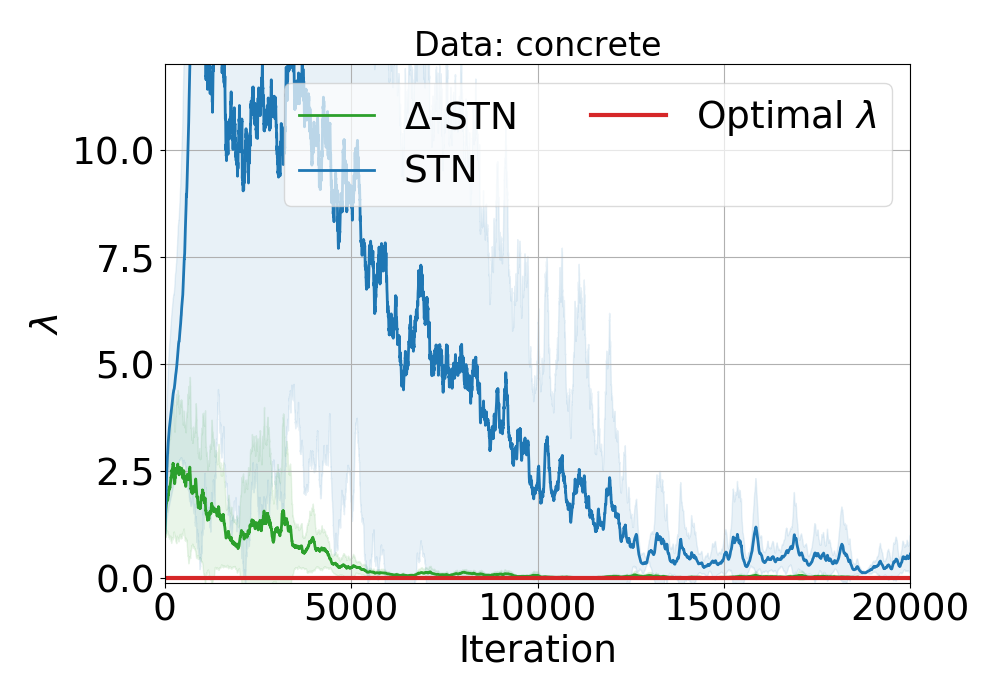}
 \includegraphics[width=0.30\textwidth]{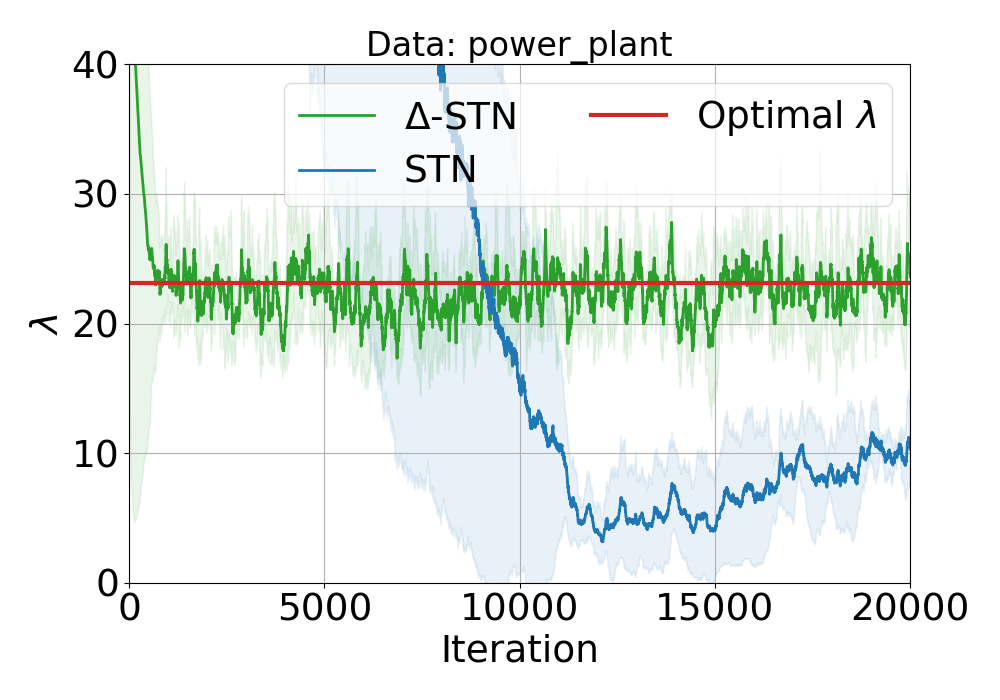}
 \includegraphics[width=0.30\textwidth]{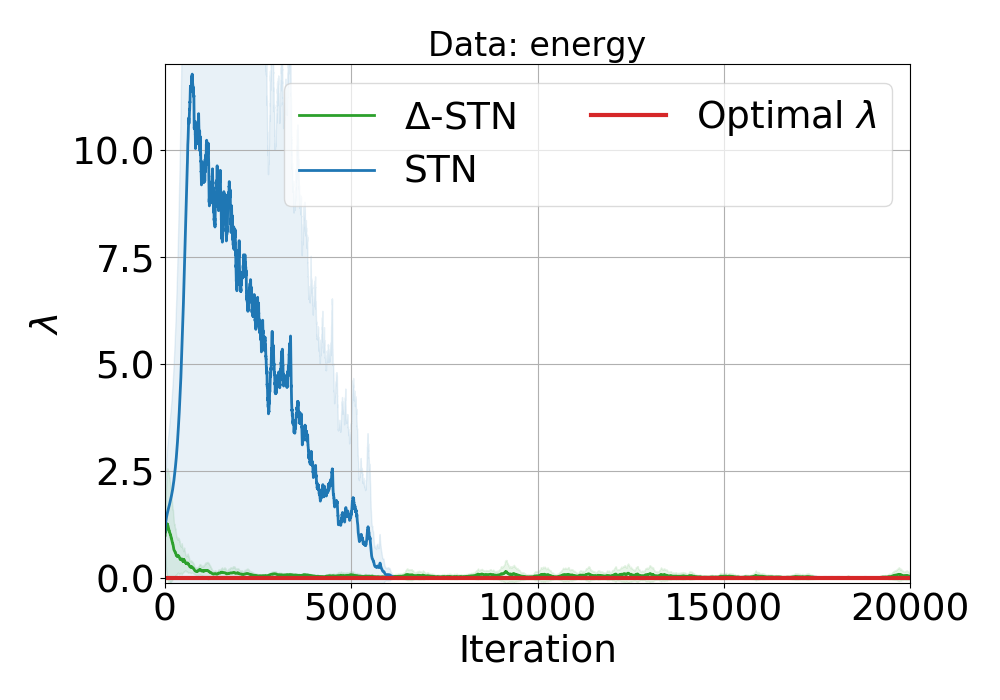}\\
 \includegraphics[width=0.30\textwidth]{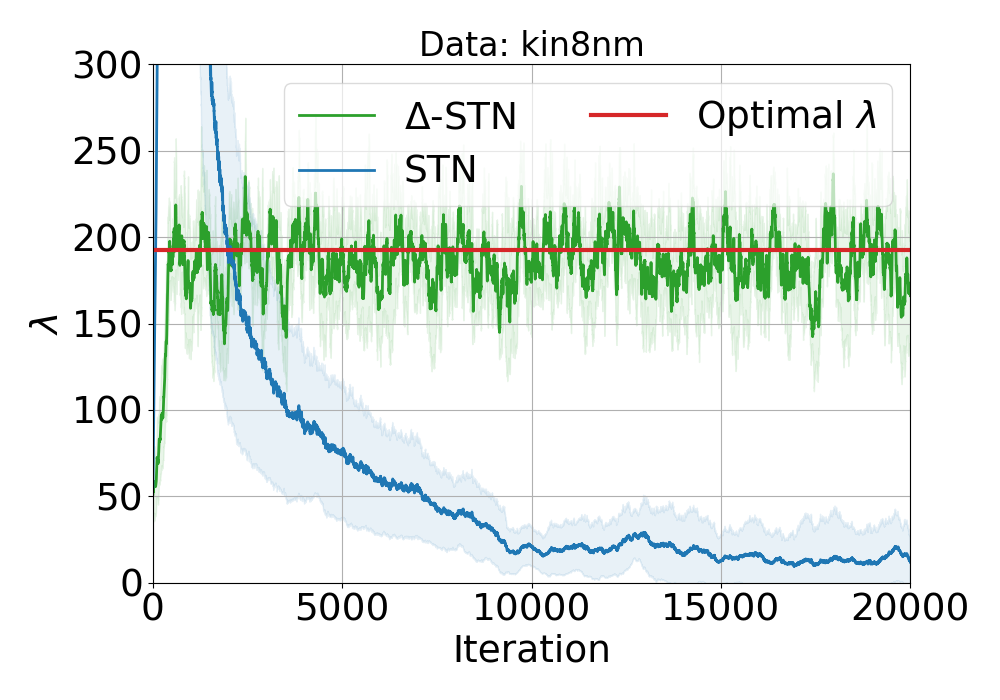}
 \includegraphics[width=0.30\textwidth]{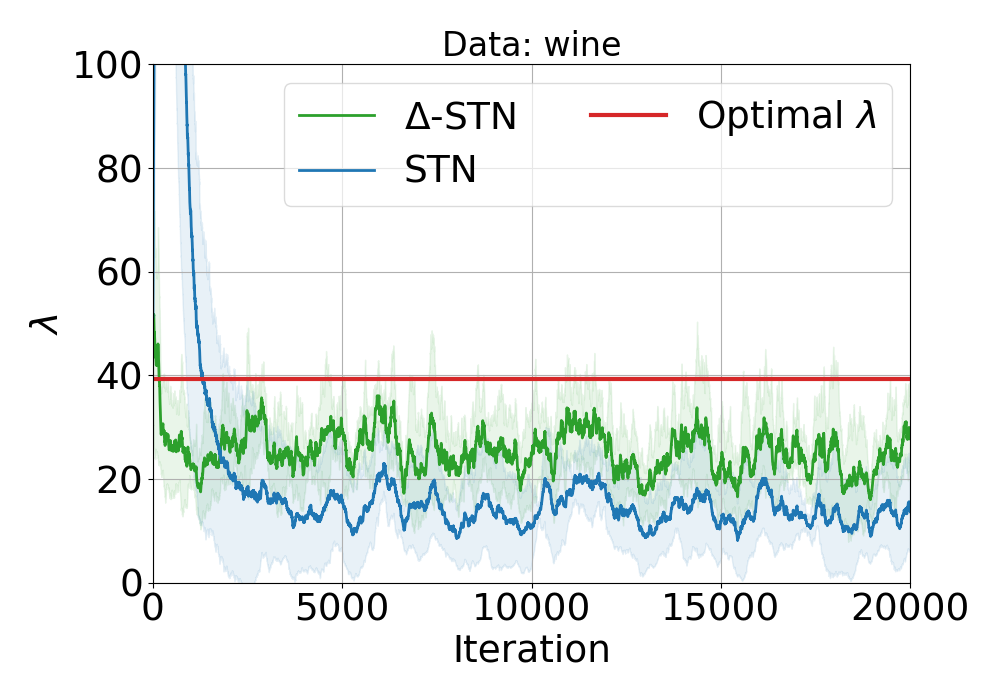}
\includegraphics[width=0.30\textwidth]{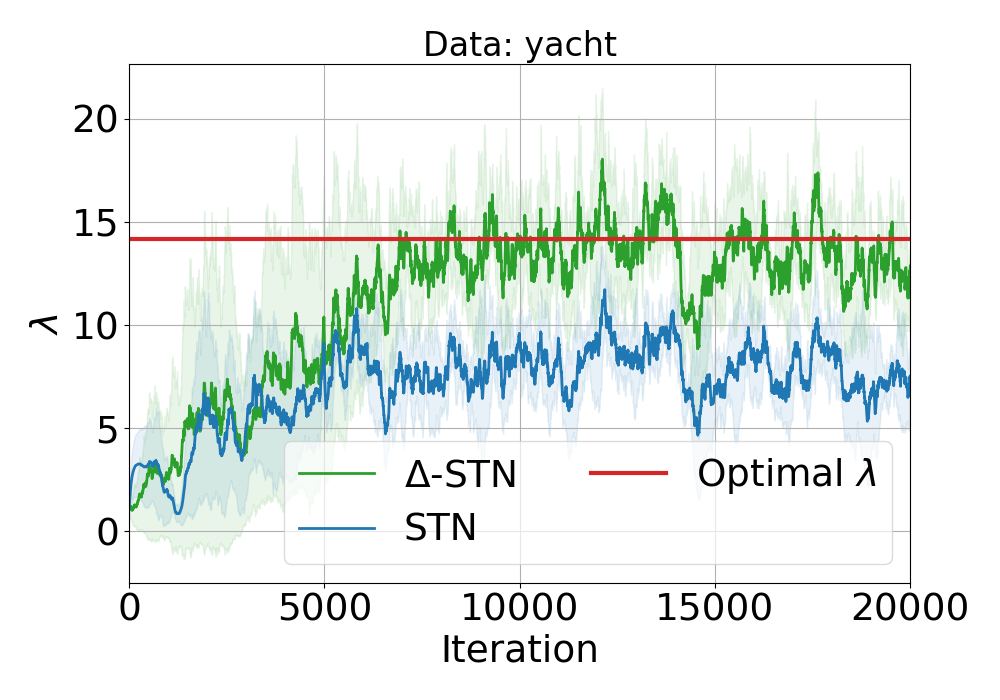}
 \caption{A comparison of STNs and $\Delta$-STNs on linear regression with $L_2$ regularization.}
 \label{fig:comparison_uci_linear_reg}
\end{figure}

We note that dividing the regularization penalty by the number of training set as in Eqn.~\ref{eq:linear-reg-obj} is non-standard. Because the STN is sensitive to the scale of the hyperparameters, we also experimented with the objective that does not divide the regularization penalty by the number of training data. As shown in figure \ref{fig:comparison_uci_linear_reg2}, $\Delta$-STNs still outperform STNs in terms of convergence, accuracy, and stability.
\begin{figure}[h!]
\centering
 \includegraphics[width=0.30\textwidth]{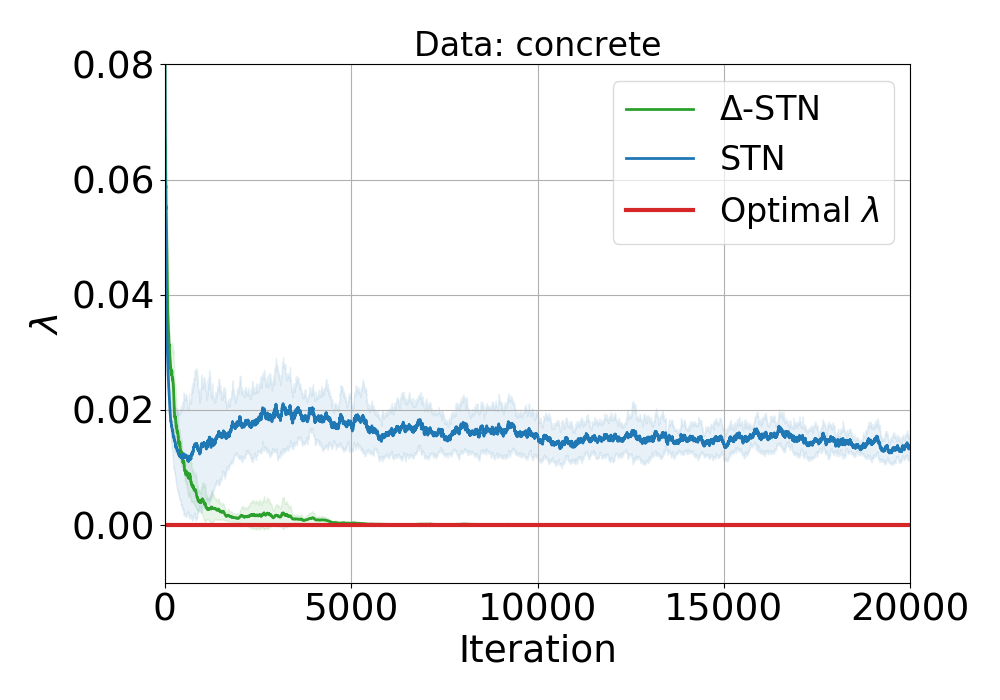}
 \includegraphics[width=0.30\textwidth]{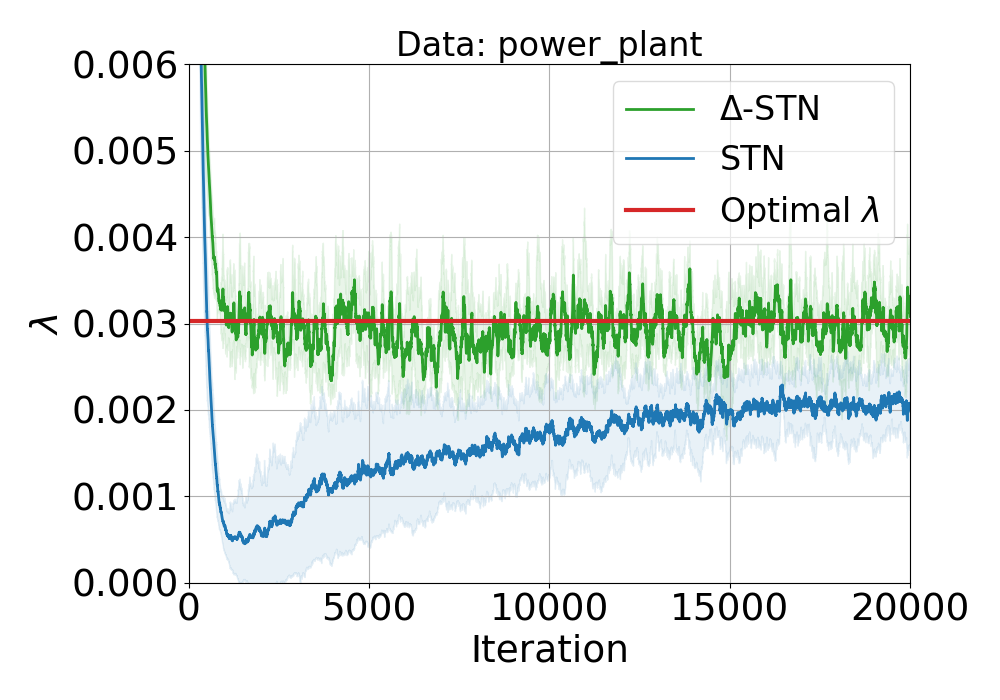}
 \includegraphics[width=0.30\textwidth]{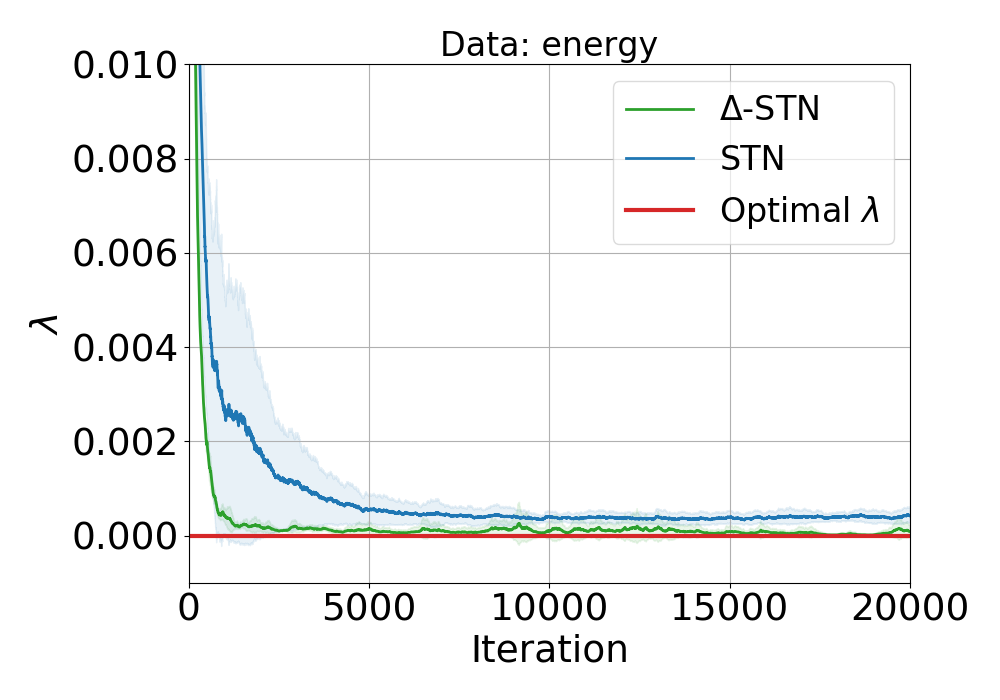}\\
 \includegraphics[width=0.30\textwidth]{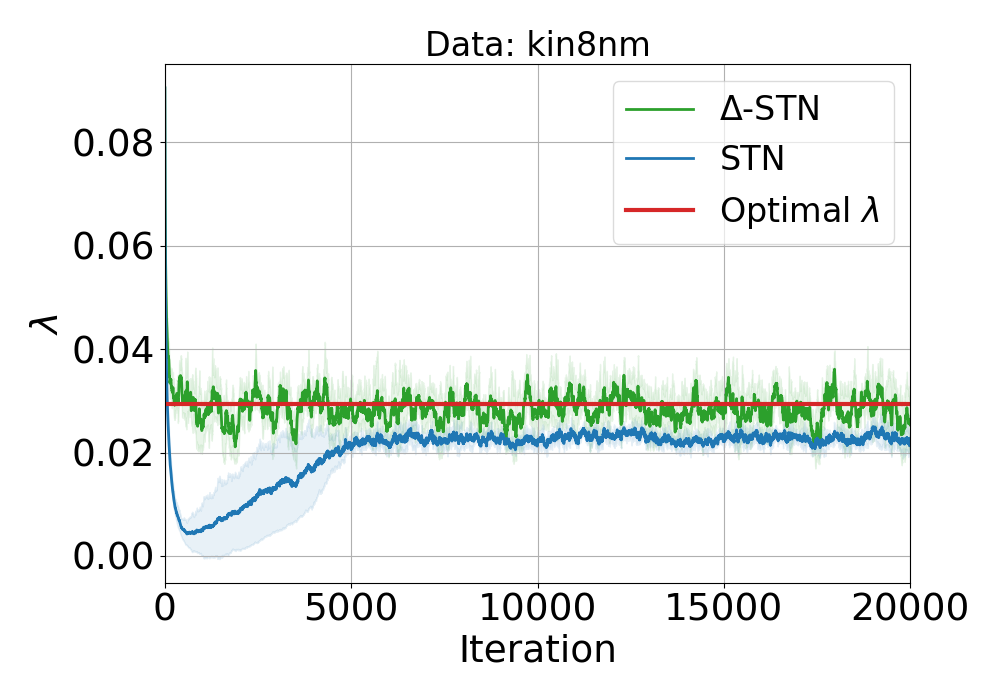}
 \includegraphics[width=0.30\textwidth]{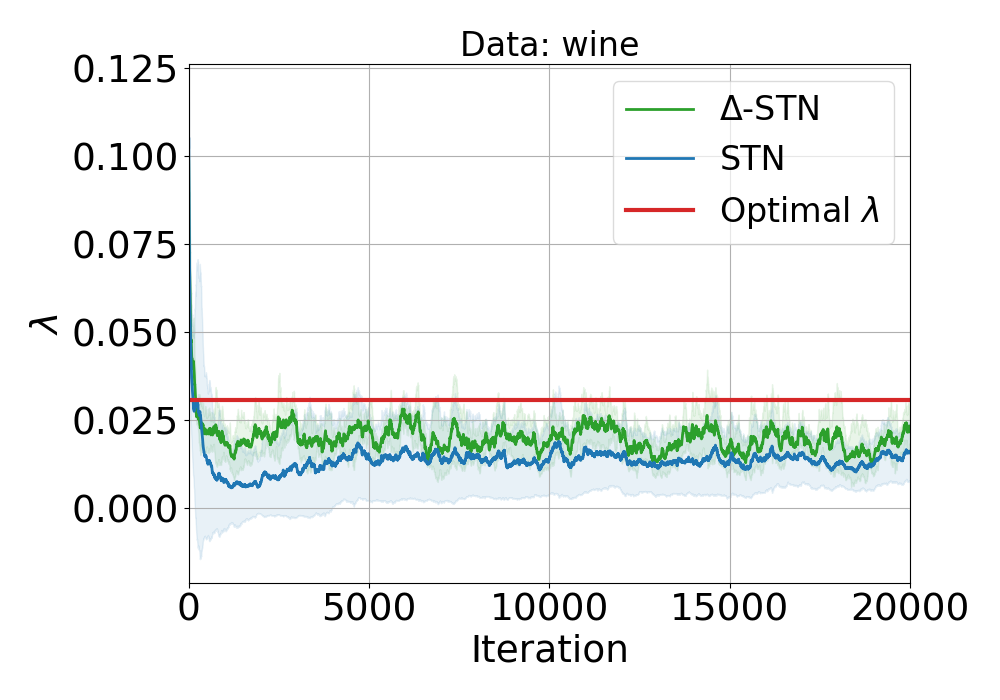}
\includegraphics[width=0.30\textwidth]{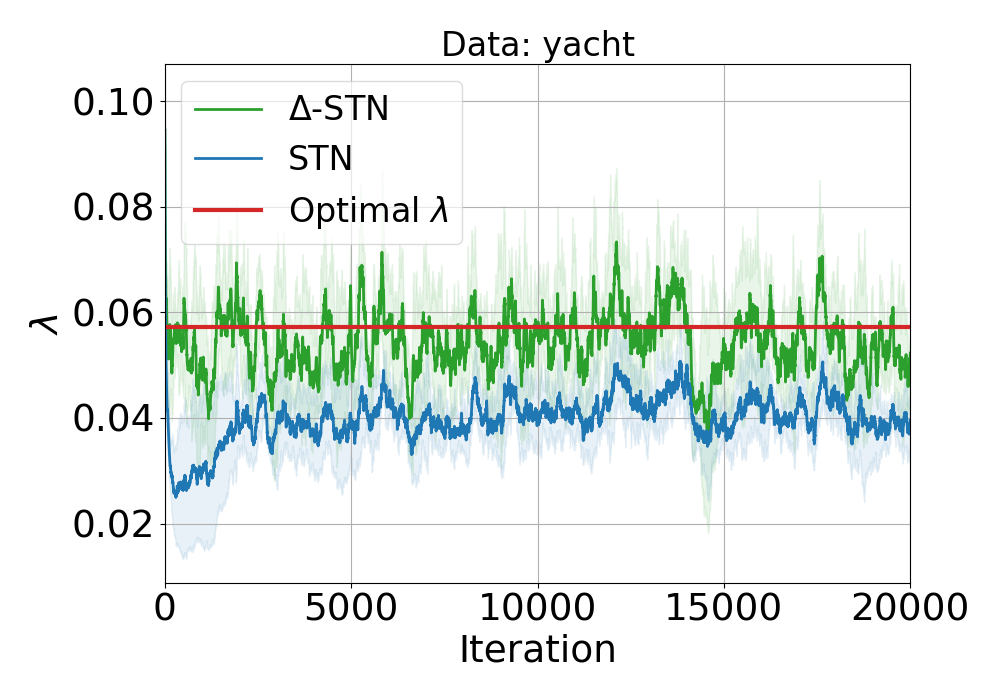}
 \caption{A comparison of STNs and $\Delta$-STNs on linear regression with $L_2$ regularization without dividing the regularization penalty by the number of training data.}
\label{fig:comparison_uci_linear_reg2}
\end{figure}

\begin{wrapfigure}[10]{R}{0.55\textwidth}
    \centering
    \vspace{-0.1cm}
    \includegraphics[width=0.55\textwidth]{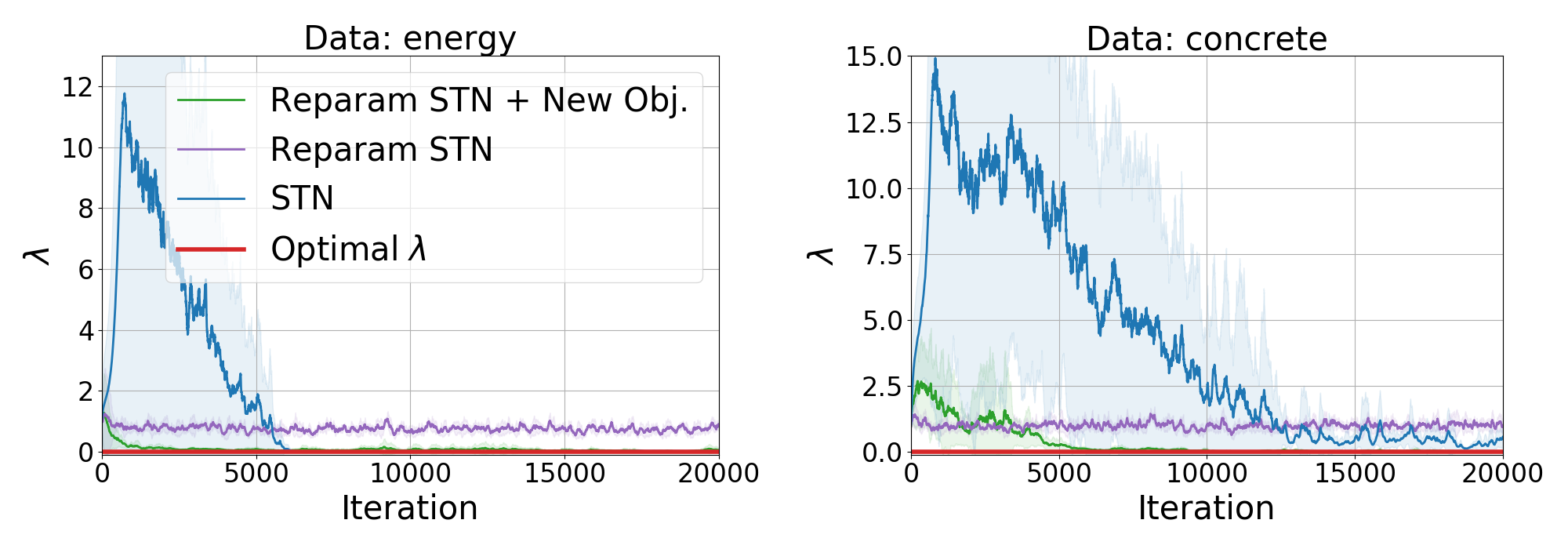}
    \vspace{-0.5cm}
    \caption{A comparison of hyperparameter updates found by STNs and $\Delta$-STNs, decoupling effects of reparameterization and modified objective.}
    \label{fig:decouple}
\end{wrapfigure}
In figure~\ref{fig:decouple}, we decoupled the centered parameterization from all other changes to the training objective for datasets that experimented spikes in the hyperparameter updates. The centering (``Reparam STN'') described in section~\ref{sec:method-centered-param} by itself eliminated the spike in the hyperparameter updates as our analysis predicts. The modification of the training objective (``Reparam STN + New Obj.'') described in section~\ref{sec:method-modified-objective} further helped improving the accuracy of the fixed point as detailed in appendix~\ref{sec:linear-reg-example}.

Similarly, the training objective for dropout experiments was:
\begin{align}
  \mathcal{L}_T (\lambda, \w) = \frac{1}{2n} \norm{ (\mbf{R} \odot \mbf{X}) \mbf{w} - \mbf{t}}^2,
\end{align}
where $\mbf{R} \in \mathbb{R}^{n \times m}$ is a random matrix with $\mbf{R}_{ij} \sim \text{Bern}(\lambda)$ and $\odot$ is an element-wise multiplication. Note that linear regression with input dropout is, in expectation, equivalent to ridge regression with a particular form of regularization penalty~\citep{srivastava2014dropout}. We initialized the dropout rate to 0.2 for all datasets. For both experiments, the validation objective was the following:
\begin{align}
 \mathcal{L}_V (\lambda, \w) = \frac{1}{2 v} \norm{\mbf{X_{\text{valid}}} \mbf{w} - \mbf{t}_{\text{valid}}}^2,
\end{align}
where $v$ is the number of validation data, and $\mbf{X}_{\text{valid}} \in \mathbb{R}^{v \times m}$ and $\mbf{t}_{\text{valid}} \in \mathbb{R}^v$ are the inputs and targets of the validation data.

We adopted the same experimental settings for linear networks with Jacobian norm regularization. The model was consisted of 5 hidden layers and each hidden layer had a square matrix with the same dimension as the input. The training objective was as follows:
\begin{align}
    \mathcal{L}_T (\lambda, \w) = \frac{1}{2n} \norm{y(\mbf{X}; \w) - \mbf{t}}^2 + \frac{\exp(\lambda)}{2n} \norm{\frac{\partial y}{\partial \mbf{X}}(\mbf{X}; \w)}^2 ,
\end{align}
where $y$ is the linear network's prediction. The plots presented in figures \ref{fig:linear-regression-result} and \ref{fig:linear-network} show the updates of hyperparameters averaged over 5 runs with different random seeds.

\subsection{Image Classification}
\label{exp-image-class}

For the baselines (grid search, random search, and Bayesian optimization), the search spaces were as follows: dropout rates were in $[0, 0.85]$; the number of Cutout holes was in $[0, 4]$; Cutout length was in $[0, 24]$, noises added to contrast, brightness, and saturation were in $[0, 1]$; noise added to hue was in $[0, 0.5]$; the random scale and translation of an image were in $[0, 0.5]$; the random degree and shear were in $[0, 45]$. All other settings (e.g.~learning rate schedule, batch size) were the same as those of $\Delta$-STNs. We used Tune~\citep{liaw2018tune} for gird and random searches, and Spearmint~\citep{snoek2012practical} for Bayesian optimization.

For all experiments, we set training and validation update intervals to $T_{\text{train}} = 5$ and $T_{\text{valid}} = 1$, and the perturbation scale was initialized to 1. We further performed a grid search over the entropy weight from a set $\{1e^{-2}, 1e^{-3}, 1e^{-4}\}$ on 1 run and repeated the experiments with the selected entropy weight 5 times. We reported the average over 5 runs.

\subsubsection{MNIST}
\label{exp-detail-mnist}
We held out 15\% of the training data for validation. We trained a multilayer perceptron using SGD with a fixed learning rate 0.01 and momentum 0.9, and mini-batches of size 128 for 300 epochs. The MLP model consisted of 3 hidden layers with 1200 units and ReLU activations. We used 5 warm-up epochs that did not optimize the hyperparameters. In total of 3 dropout rates that control the inputs and per-layer activations were tuned. The hyperparameters were optimized using RMSProp with learning rate 0.01. We used entropy weights $\tau = 0.001$ for both STNs and $\Delta$-STNs, and initialized all dropout rates to 0.05, where the range for dropout was $[0, 0.95]$. We show the hyperparameter schedules found by $\Delta$-STNs on figure
\ref{fig:mnist-schedule}.

\subsubsection{FashionMNIST}
Similar to the MNIST experiment, we held out 15\% of the training data for validation. We trained Convolutional neural network using SGD with a fixed learning rate 0.01 and momentum 0.9, and mini-batches of size 128 for 300 epochs. The SimpleCNN model consisted of 2 convolutional layers with 16 and 32 filters, both with kernel size 5, followed by 2 fully-connected layers with 1568 hidden units and with ReLU activations on all hidden layers. We tuned 6 hyperparameters: (1) input dropout, (2) per-layer activation dropouts, and (3) Cutout holes and length. We set 5 warm-up epochs that did not optimize the hyperparameters and used RMSProp with learning rate 0.01 for optimizing the hyperparameters. The entropy weight was $\tau = 0.001$ for both STNs and $\Delta$-STNs. We initialized the dropout rates to $0.05$, the number of Cutout holes to $1$, and Cutout length to $4$. Dropout rates had a search space of $[0, 0.95]$, the number of Cutout holes had $[0, 4]$, and Cutout length had $[0, 24]$. The hyperparameter schedules prescribed by $\Delta$-STNs are shown in figure \ref{fig:mnist-schedule}.

\begin{figure}[t]
     \subfigure[MLP dropout]{\includegraphics[width=0.33\textwidth]{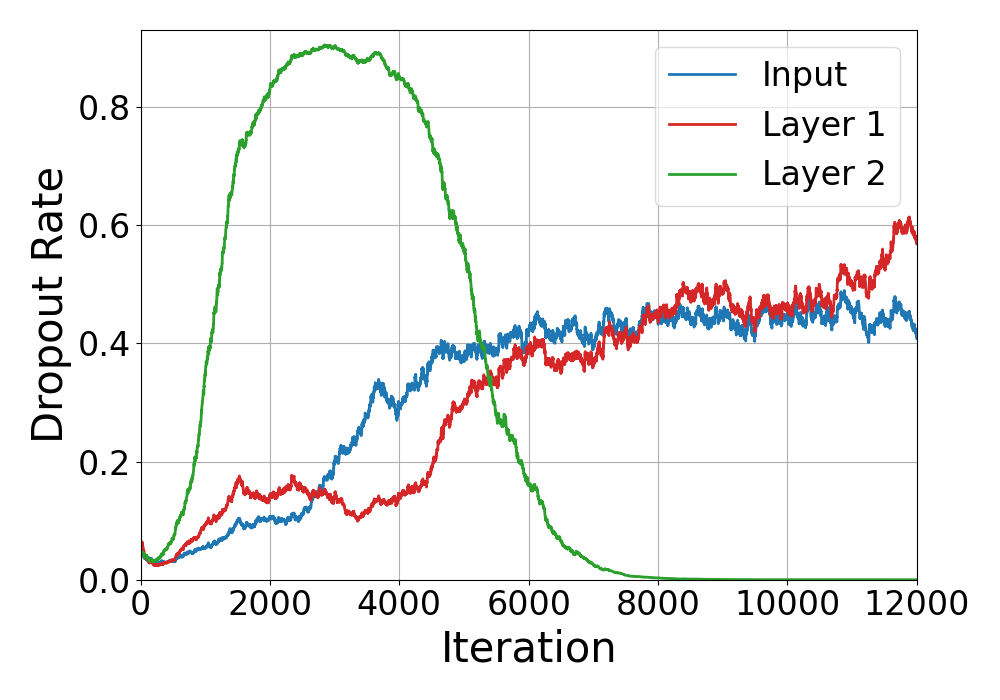}\label{fig:mnist-dropout}}
     \subfigure[SimpleCNN dropout]{\includegraphics[width=0.33\textwidth]{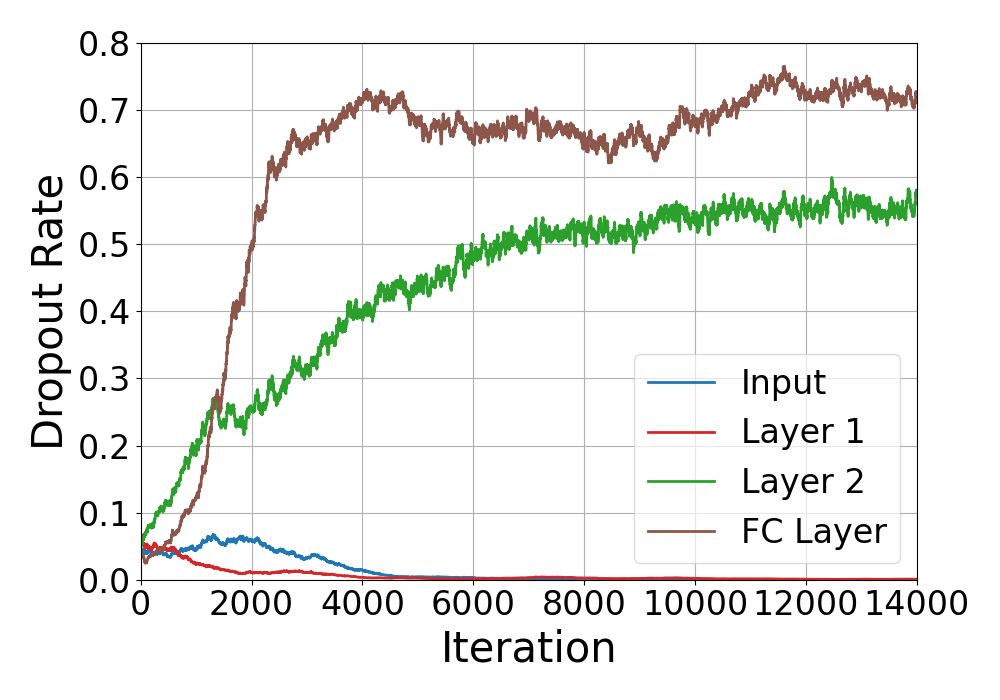}\label{fig:scnn-dropout}}
     \subfigure[SimpleCNN Cutout]{\includegraphics[width=0.33\textwidth]{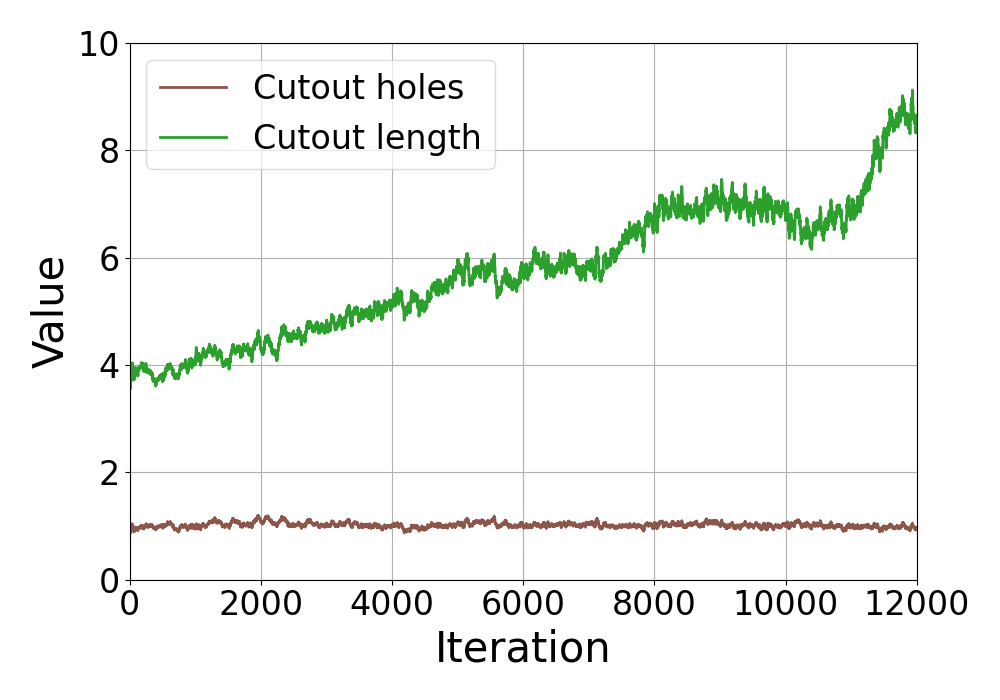}\label{fig:scnn-cutout}}
     \caption{Hyperparameter schedules prescribed by $\Delta$-STNs for \textbf{(a)} MNIST and \textbf{(b)}, \textbf{(c)} FashionMNIST datasets.}
     \label{fig:mnist-schedule}
     \vspace{-5mm}
 \end{figure}

\subsubsection{CIFAR10}
\begin{figure}[t]
     \subfigure[Schedule for dropouts]{\includegraphics[width=0.33\textwidth]{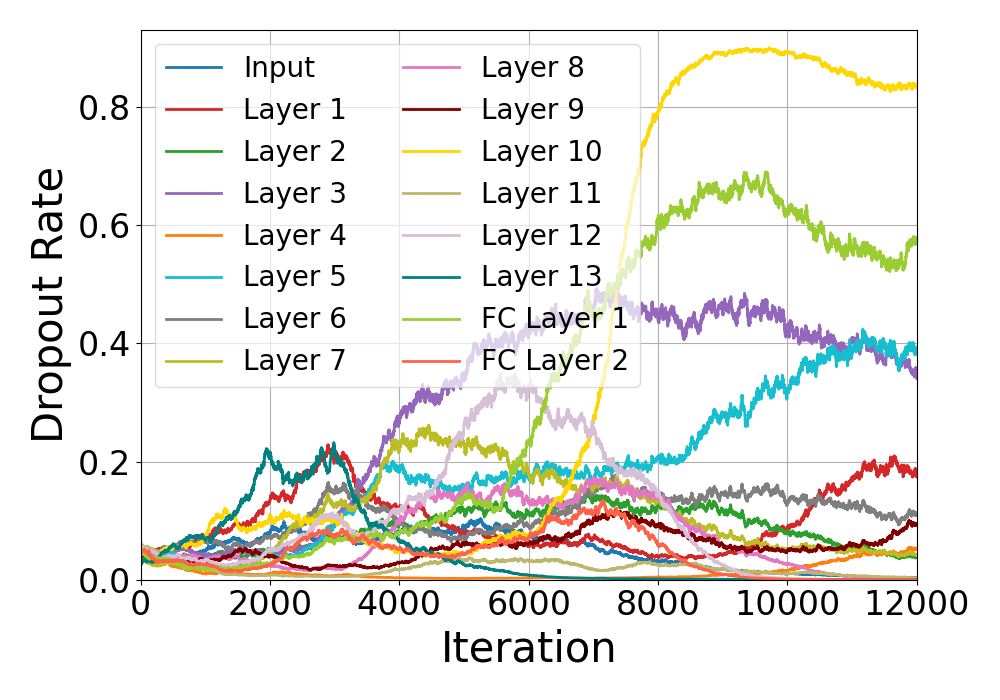}\label{fig:vgg-dropout}}
     \subfigure[Schedule for data augmentations]{\includegraphics[width=0.33\textwidth]{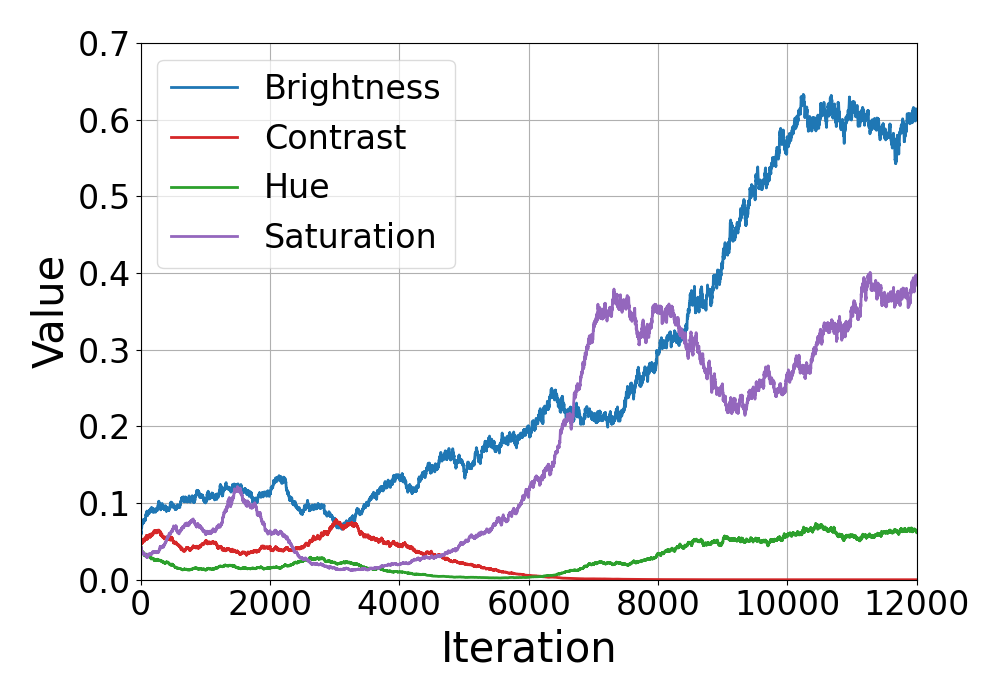}\label{fig:vgg-data-aug}}
     \subfigure[Schedule for Cutout]{\includegraphics[width=0.33\textwidth]{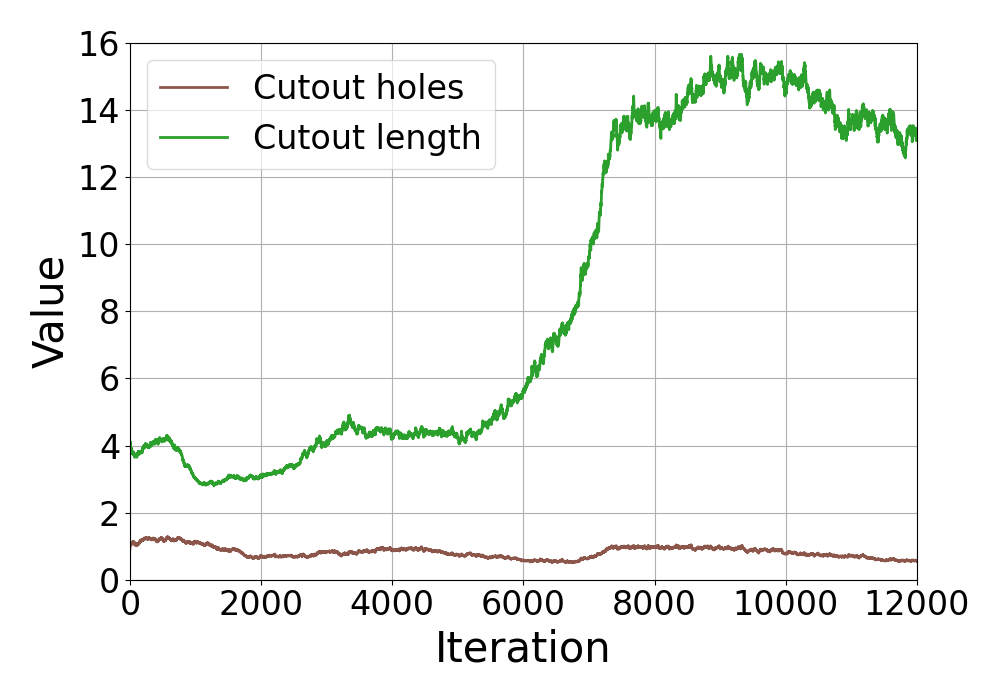}\label{fig:vgg-cutout}}
     \caption{Hyperparameter schedules found by $\Delta$-STNs on VGG16 for \textbf{(a)} dropout rates, \textbf{(b)} data augmentation parameters, and \textbf{(c)} Cutout parameters.}
     \label{fig:vgg}
     \vspace{-5mm}
\end{figure}
\begin{figure}[t]
     \subfigure[Schedule for dropouts]{\includegraphics[width=0.33\textwidth]{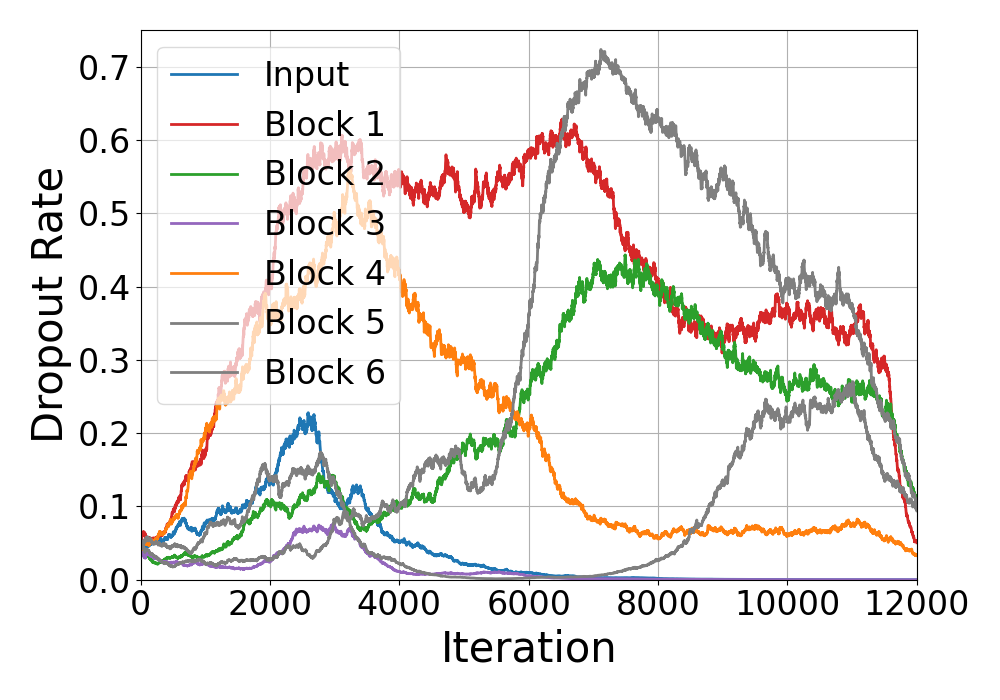}\label{fig:vgg-dropout}}
     \subfigure[Schedule for data augmentations]{\includegraphics[width=0.33\textwidth]{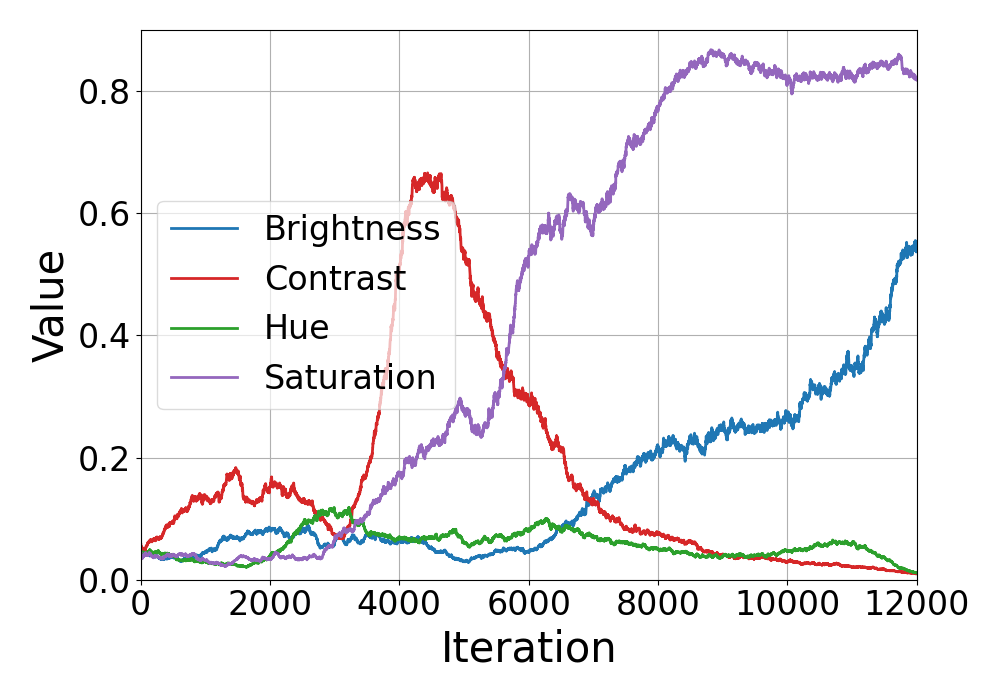}\label{fig:vgg-data-aug}}
     \subfigure[Schedule for Cutout]{\includegraphics[width=0.33\textwidth]{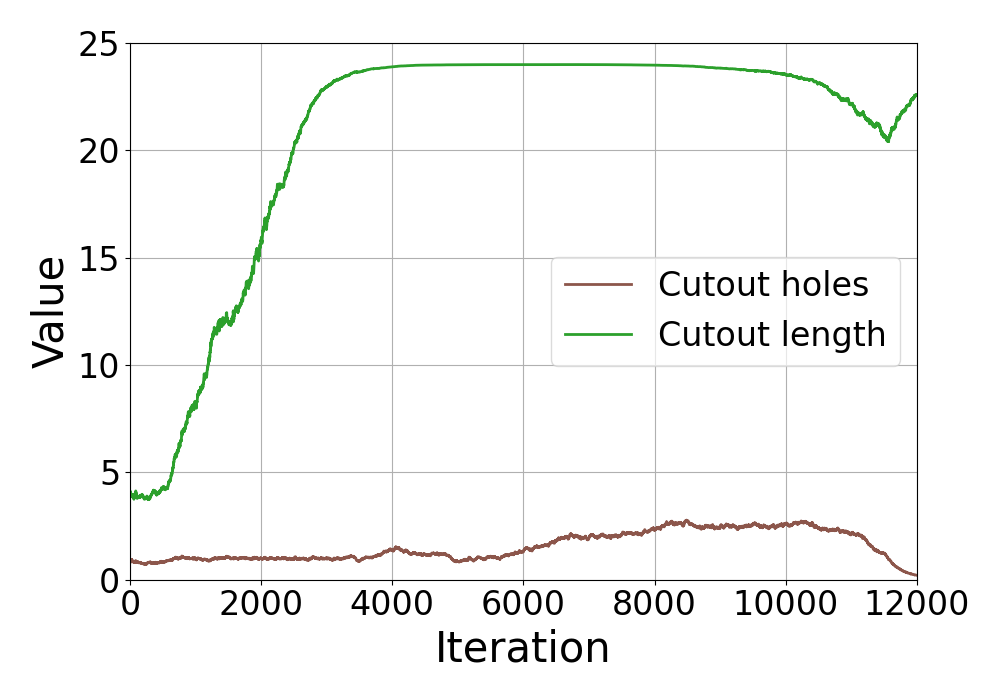}\label{fig:vgg-cutout}}
     \caption{Hyperparameter schedules found by $\Delta$-STNs on ResNet18 for \textbf{(a)} dropout rates, \textbf{(b)} data augmentation parameters, and \textbf{(c)} Cutout parameters.}
     \label{fig:resnet}
    %  \vspace{-5mm}
\end{figure}
We held out 20\% of the training data for validation. AlexNet, VGG16, and ResNet18 were trained with SGD with initial learning rates of 0.03, 0.01, 0.05 and momentum 0.9, using mini-batches of size 128. We decayed the learning rate after 60 epochs by a factor of 0.2 and trained the network for 200 epochs. We tuned 18 and 26 hyperparameters for AlexNet and VGG16: (1) input dropout, (2) per-layer activation dropouts, (3) scaling noise applied to the input, (4) Cutout holes and length, (5) amount of noise applied to hue, saturation, brightness, and contrast to the image, and (6) random translation, scale, rotation, and shear applied to input image. Similarly, 19 hyperparameters were optimized for ResNet18, where we applied dropout after each block and applied the same augmentation hyperparameters as AlexNet and VGG16. 

\begin{wraptable}[7]{L}{0.45\textwidth}
\small
\vspace{-0.4cm}
\caption{Final validation (test) accuracy of STN and $\Delta$-STN on image classification tasks.}
\vspace{-0.4cm}
\begin{center}
\begin{tabular}{|c|c|c|c|}
\hline
Network  & STN           & $\Delta$-STN  \\ \hline
AlexNet      & 83.96 (83.38) & \textbf{85.63 (85.19)} \\ \hline
VGG16       & 89.22 (88.66) & \textbf{90.96 (90.26)} \\ \hline
ResNet18     & 91.51 (90.16) & \textbf{93.46 (92.55)} \\ \hline
\end{tabular}
\end{center}
\label{tab:acc}
\end{wraptable}

The hyperparameters were optimized using RMSProp with a fixed learning rate 0.01. For all STN-type models, we used 5 epochs of warm-up for the model parameters. We used entropy weights of $\tau = 0.001$ on AlexNet, and $\tau = 0.0001$ and $\tau=0.001$ on VGG16 for $\Delta$-STNs and STNs, respectively. For ResNet18, we used entropy weight of $\tau = 0.0001$. The number Cutout holes was initialized to 1, the Cutout length was initialized to 4, and all other hyperparameters were initialized to 0.05. The search spaces for amount of noises added to contrast, brightness, saturation were $[0, 1]$ and that for noise added hue was $[0, 0.5]$. The search spaces for random translation and scale were also $[0, 0.5]$ and those for random rotation and shear were $[0, 45]$. The search spaces for all other hyperparameters were the same as those in experiments for FashionMNIST. We show the hyperparameter schedules obtained by $\Delta$-STNs on figure~\ref{fig:vgg} and figure~\ref{fig:resnet} for VGG16 and ResNet18 architectures. We also show the validation (training) accuracy obtained by each architecture in table~\ref{tab:acc}. $\Delta$-STNs showed a consistent improvement in validation accuracy compared to STNs.

\subsection{Language Modeling}
\label{exp:language}
\begin{wrapfigure}[9]{R}{0.5\textwidth}
    \centering
    \vspace{-0.5cm}
    \includegraphics[width=0.5\textwidth]{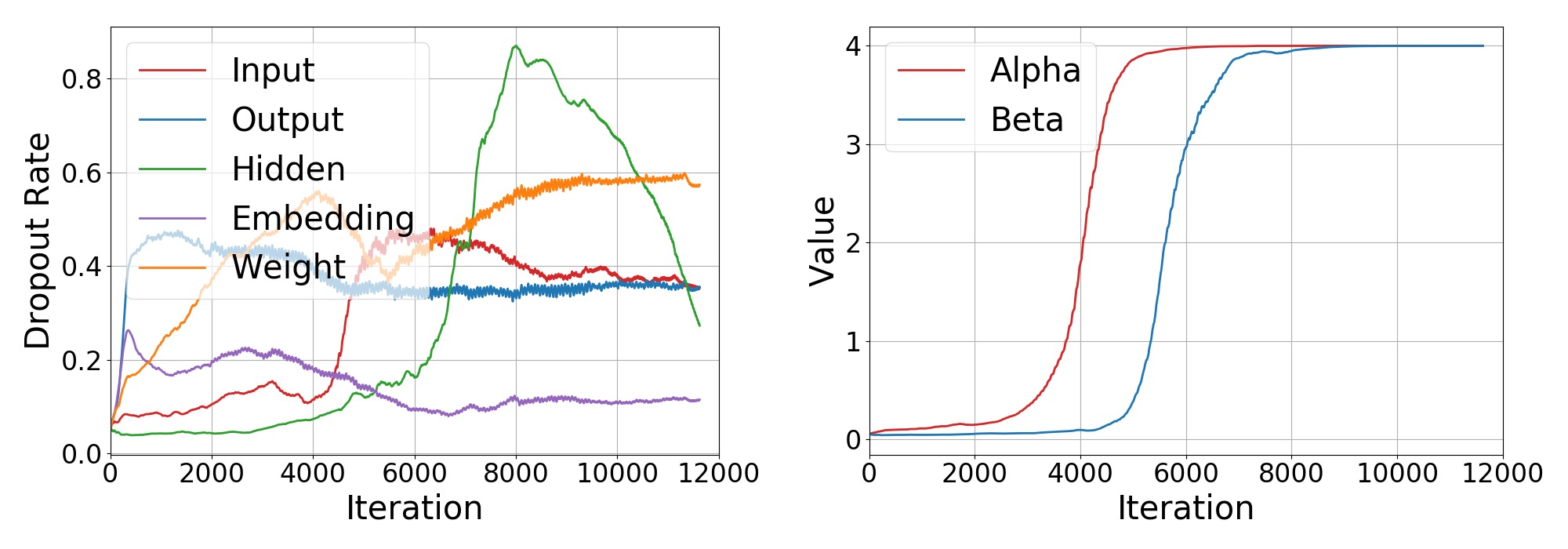}
    \vspace{-0.5cm}
    \caption{Hyperparameter schedules prescribed by $\Delta$-STNs on LSTM experiments.}
    \label{fig:lstm-result}
\end{wrapfigure}
We adopted the same experiment set-up to that of~\citet{mackay2019self}. We trained a 2 layer LSTM with 650 hidden units and 650-dimensional word embedding on sequences of length 70, using mini-batches of size 40. We used SGD with initial learning rate of 30 and decayed by a factor of 4 when the validation loss did not improve for 5 epochs. We further used gradient clipping with parameter $0.25$. The hyperparameters were optimized using Adam with a fixed learning rate of $0.01$. We used 10 warm-up epochs for both STNs and $\Delta$-STNs with a fixed perturbation scale of 1 and terminated the training when the learning rate for hypernetwork parameters decreased below $0.0003$. 

In total of 7 hyperparameters were optimized. We tuned variational dropout applied to the inputs, hidden states between layers, and the output to the model. Embedding dropout that sets an entire row of the word embedding matrix to $\mbf{0}$ was also tuned, eliminating certain words in the embedding matrix. We also regularized hidden-to-hidden weight matrix using DropConnect~\citep{wen2018flipout}. At last, activation regularization (AR) and temporal activation regularization (TAR) coefficients were tuned. For all RS, BO, STNs, and $\Delta$-STNs, the search spaces for AR and TAR were $[0,4]$ and we initialized them to 0.05. Similarly, all dropout rates were initialized to 0.05, and the search space was $[0, 0.95]$ for STNs and $\Delta$-STNs while it was $[0, 0.75]$ for RS and BO. We present the hyperparameter schedules found by $\Delta$-STNs in figure~\ref{fig:lstm-result}.
\begin{wraptable}[9]{R}{0.5\textwidth}
    \small
    \vspace{-3.3mm}
    % \footnotesize
    \centering
    \setlength{\tabcolsep}{6pt}
    \begin{tabular}{@{}ccc@{}}
    \toprule
    \centering
    Method                  & Valid   & Test   \\ \midrule
    $p=0.5$, Fixed         & 0.059           & \textbf{0.054 }          \\
    $p = 0.5$ w/ Gaussian Noise     & 0.057          & 0.059            \\
    $p = 0.43$ (Final $\Delta$-STN Value)     & 0.059           & 0.060           \\
    STN         & 0.058  & 0.058   \\
    $\Delta$-STN             & \textbf{0.053}         &\textbf{ 0.054 }        \\ \bottomrule
    \end{tabular}
    \caption{A comparison of validation and test losses on MLP trained with fixed and perturbed input dropouts, and MLP trained with STNs and $\Delta$-STNs.}
    \label{tab:dropout-rate}
\end{wraptable}

% \newpage

\section{Additional Results}
\label{appendix:additional-results}
We present additional experiments in this section.

\subsection{Hyperparameter Schedules}

\begin{figure}[t]
    \centering
     \subfigure[Schedule found by $\Delta$-STNs]{\includegraphics[width=0.45\textwidth]{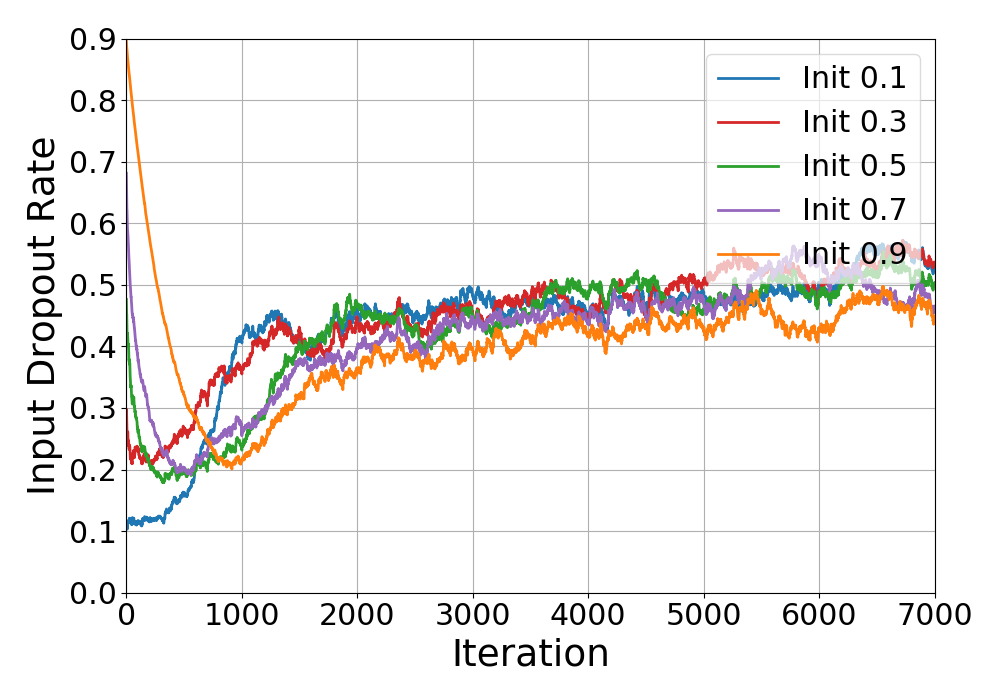}\label{fig:delta-schedule}}
     \subfigure[Schedule found by STNs]{\includegraphics[width=0.45\textwidth]{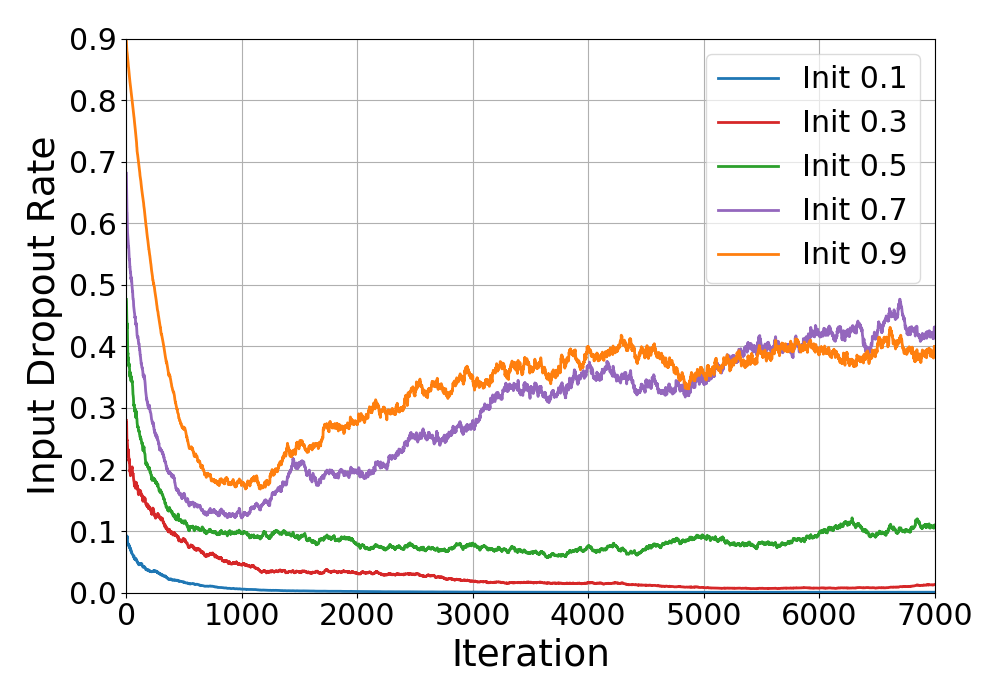}\label{fig:stn-schedule}}
     \caption{A comparison of input dropout schedules found by \textbf{(a)} $\Delta$-STNs and \textbf{(b)} STNs on MLP with different initialization. $\Delta$-STNs found the hyperparameter schedule more robustly and accurately compared to STNs.}
     \label{fig:mnist-schedule-init}
    %  \vspace{-5mm}
\end{figure}

Because the hyperparameters are tuned online, STNs do not use a fixed set of hyperparameters throughout the training. Instead, it finds hyperparameter schedules that outperforms fixed hyperparameters~\citep{mackay2019self}. We trained a multilayer perceptron on MNIST dataset and tuned the input dropout matrix with using STNs and $\Delta$-STNs. The same experimental configurations to those of MNIST experiment (appendix~\ref{exp-detail-mnist}) were used, except that we only tuned a single hyperparameter and fixed the perturbation scale to 1. We compared the hyperparameter schedules found by STNs and $\Delta$-STNs with different initializations. As shown in figure~\ref{fig:mnist-schedule-init}, $\Delta$-STNs found the hyperparameter schedule more robustly compared to STNs. We further compared best validation loss and corresponding test loss achieved by $\Delta$-STNs and STNs in table~\ref{tab:dropout-rate}. The fixed and perturbed dropout rates found by grid search and given by final $\Delta$-STN value were also compared. Our $\Delta$-STN was able to find the hyperparameter schedules more accurately and robustly as well.

\subsection{Sensitivity Studies}
\begin{figure}[t]
    \centering
     \subfigure[SimpleCNN]{\includegraphics[width=0.45\textwidth]{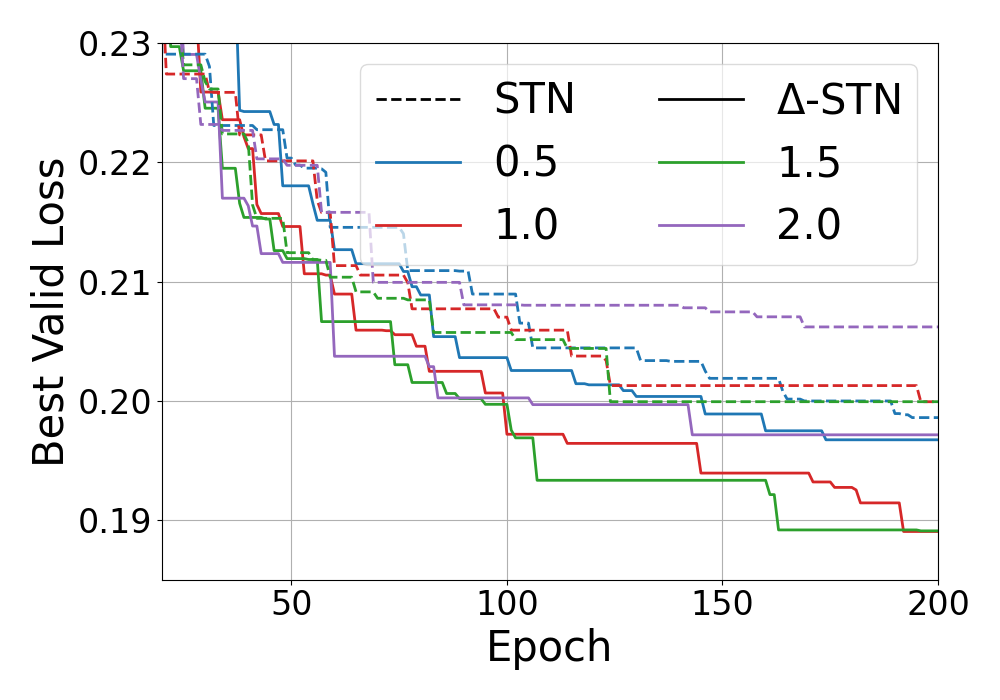}\label{fig:alexnet-scale}}
     \subfigure[AlexNet]{\includegraphics[width=0.45\textwidth]{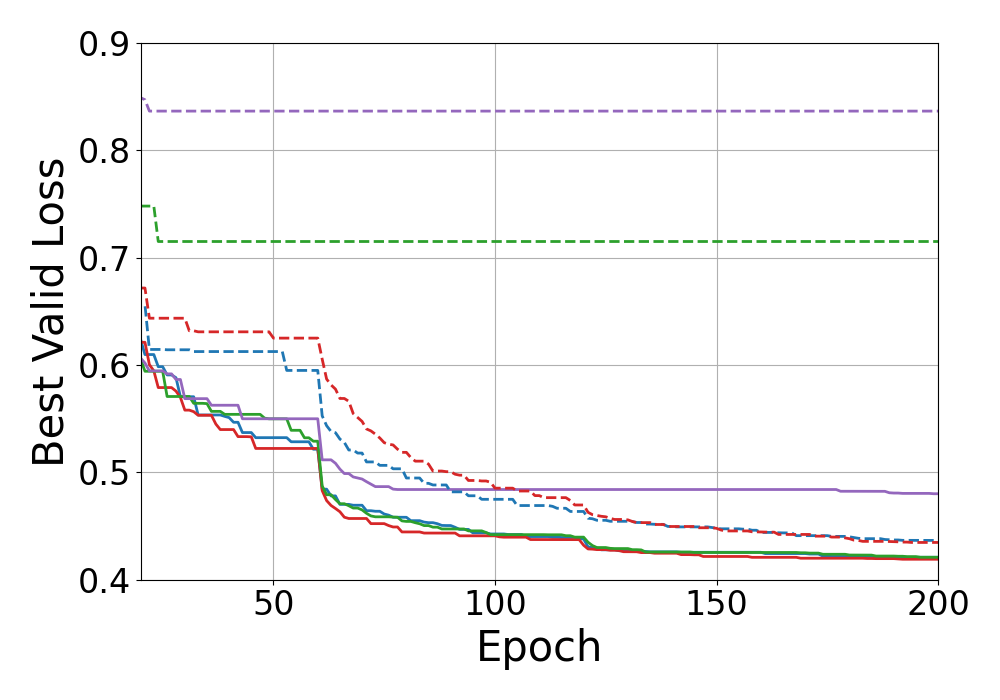}\label{fig:alexnet-steps}}
     \caption{The effect of using different perturbation scale on \textbf{(a)} SimpleCNN and \textbf{(b)} AlexNet for STNs and $\Delta$-STNs. $\Delta$-STN is more robust to a wider range of perturbation scale.}
     \label{fig:sensiv2}
    %  \vspace{-5mm}
\end{figure}

We show the sensitivity of $\Delta$-STNs to meta-parameters. Specifically, we investigated the effect of using different training and validation update intervals (figure~\ref{fig:sensiv}), and different fixed perturbation perturbation scale (figure \ref{fig:sensiv2}) on SimpleCNN and AlexNet. $\Delta$-STNs showed more robustness to different perturbation scale.

\begin{figure}
    \centering
    \includegraphics[width=6cm]{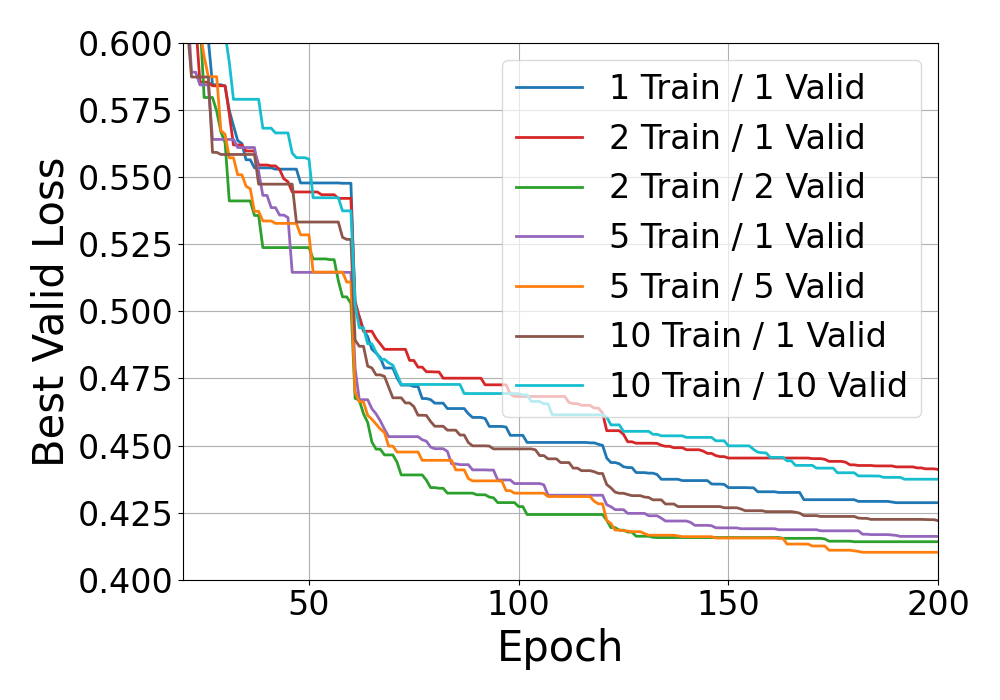}
    \caption{The effect of using different training and validation update intervals on $\Delta$-STNs}
    \label{fig:sensiv}
\end{figure}

\end{document}